\documentclass[sigconf]{aamas}

\usepackage{balance} 

\usepackage{newfloat}
\usepackage{listings}
\DeclareCaptionStyle{ruled}{labelfont=normalfont,labelsep=colon,strut=off} 
\floatstyle{ruled}
\newfloat{listing}{tb}{lst}{}
\floatname{listing}{Listing}

\usepackage{enumitem}
\usepackage{xfrac}
\usepackage{xcolor}
\usepackage{amsmath,amsfonts,amsthm}
\usepackage{cleveref}
\crefformat{footnote}{#2\footnotemark[#1]#3}
\usepackage{bm}
\usepackage{dsfont}
\usepackage{subcaption}
\usepackage{makecell}
\usepackage{colortbl}
\usepackage{tcolorbox}

\definecolor{dkgreen}{rgb}{0,0.6,0}
\definecolor{gray}{rgb}{0.5,0.5,0.5}
\definecolor{mauve}{rgb}{0.58,0,0.82}

\newtheorem{definition}{Definition}
\newtheorem{theorem}{Theorem}

\newtheorem{remark}{Remark}

\makeatletter
\newcommand{\thickhline}{%
    \noalign {\ifnum 0=`}\fi \hrule height 1pt
    \futurelet \reserved@a \@xhline
}
\newcolumntype{"}{@{\hskip\tabcolsep\vrule width 1pt\hskip\tabcolsep}}
\makeatother

\makeatletter
\newcommand{\labelname}[1]{
  \def\@currentlabelname{#1}}%
\makeatother

\setcopyright{ifaamas}
\acmConference[AAMAS '26]{Proc.\@ of the 25th International Conference
on Autonomous Agents and Multiagent Systems (AAMAS 2026)}{May 25 -- 29, 2026}
{Paphos, Cyprus}{C.~Amato, L.~Dennis, V.~Mascardi, J.~Thangarajah (eds.)}
\copyrightyear{2026}
\acmYear{2026}
\acmDOI{}
\acmPrice{}
\acmISBN{}

\acmSubmissionID{95}

\title[LLMs for Designing PB Rules]{Large Language Models for Designing Participatory Budgeting Rules}

\author{Nguyen Thach}
\affiliation{
  \institution{University of Nebraska-Lincoln}
  \city{Lincoln, NE}
  \country{USA}
  }
\email{nate.thach@huskers.unl.edu}

\author{Xingchen Sha}
\affiliation{
  \institution{Northwestern University}
  \city{Evanston, IL}
  \country{USA}}
\email{xingchen.sha@u.northwestern.edu}

\author{Hau Chan}
\affiliation{
  \institution{University of Nebraska-Lincoln}
  \city{Lincoln, NE}
  \country{USA}
  }
\email{hchan3@unl.edu}

\begin{abstract}
Participatory budgeting (PB) is a democratic paradigm for deciding the funding of public projects given the residents' preferences, which has been adopted in numerous cities across the world.
The main focus of PB is designing \emph{rules}, functions that return feasible budget allocations for a set of projects subject to some budget constraint.
Designing PB rules that optimize both utility and fairness objectives based on agent preferences had been challenging due to the extensive domain knowledge required and the proven trade-off between the two notions. 
Recently, large language models (LLMs) have been increasingly employed for automated algorithmic design. 
Given the resemblance of PB rules to algorithms for classical knapsack problems, in this paper, we introduce a novel framework, named LLMRule, that addresses the limitations of existing works by incorporating LLMs into an evolutionary search procedure for automating the design of PB rules.
Our experimental results, evaluated on more than 600 real-world PB instances obtained from the U.S., Canada, Poland, and the Netherlands with different representations of agent preferences, demonstrate that the LLM-generated rules generally outperform existing handcrafted rules in terms of overall utility while still maintaining a similar degree of fairness.
\end{abstract}

\keywords{Participatory Budgeting, Fairness, Large Language Models}

\newcommand{\BibTeX}{\rm B\kern-.05em{\sc i\kern-.025em b}\kern-.08em\TeX}

\begin{document}


\pagestyle{fancy}
\fancyhead{}


\maketitle 


\section{Introduction}

Participatory budgeting (PB) has been widely adopted in many cities across the world, e.g., from the U.S. and the U.K. to regions as diverse as Brazil, Spain, and South Africa \cite{rocke2014framing,gutierrez2018madrid,shah2007public}. 
Originally introduced for deciding the distribution of public funding (or budget) to public projects for residents given their preferences in a democratic manner \cite{cabannes2004participatory}, PB can also be applied to electing a representative committee (e.g., parliament) for voters \cite{faliszewski2017multiwinner}, selecting validators in consensus protocols (e.g., blockchain) \cite{cevallos2021verifiably}, and locating public facilities \cite{skowron2016finding}. 
In the voting stage of a typical PB process\footnote{\label{fn:voting}The computational social choice literature has been focusing on the voting stage of PB extensively \cite{rey2023computational}, which is typically preceded by a shortlisting stage \cite{rey2021shortlisting}.}, the social planner seeks to allocate a given budget to a set of projects (e.g., building new parks and clinics, installing public toilets, and improving street conditions) for serving agents from a region by considering their preferences on the projects and optimizing the overall utility based on agent preferences. 
In addition to utility considerations, the notion of \emph{fairness} (e.g., \emph{proportionality}), in which each agent should have roughly equal influence on the allocation outcome, has been emphasized to align with the spirit of democratic decision making \cite{peters2021proportional,los2022proportional,brill2023proportionality}. 





\paragraph{Existing Efforts.}



Through immense domain expertise, almost all existing studies on PB aimed to manually design the following two types of PB rules and their corresponding implementation algorithms: 
(i) \emph{utilitarian PB rules}, which return optimal or nearly optimal budget allocations subject to some utility objective \cite{talmon2019framework,baumeister2020irresolute,sreedurga2022maxmin}, and 
(ii) \emph{proportional PB rules}, which find budget allocations provably satisfying certain fairness properties \cite{aziz2018proportionally,peters2021proportional,los2022proportional}.
Unfortunately, it has been shown that a trade-off exists between notions of utility and fairness \cite{lackner2020utilitarian,fairstein2022welfare}---PB rules that focus on utility maximization (resp. fairness) are far from being fair (resp. optimal) with respect to the budget allocations.
In fact, proportional rules may return budget allocations with severely compromised utility objective values even for relaxed definitions of fairness \cite{rey2023computational}. 
For example, the Method of Equal Shares (MES) \cite{peters2021proportional}, one of the most recognized proportional rules, finds budget allocations that satisfy strong fairness properties (e.g., EJR in special settings \cite{peters2021proportional}) but typically does not spend the entire budget, which naturally results in a significant drop in the overall utility. 
Thus, in this paper, we strive to address the following question.




\begin{quote}
\textit{Can we design PB rules focusing on budget allocations that are empirically optimal for a given utility objective while being approximately fair?}
\end{quote}


\paragraph{Our Approach and Associated Challenges.}
Large language models (LLMs) have shown remarkable capabilities in algorithmic design \cite{liu2024systematic} and programming \cite{jiang2024survey}.
Notably, integrating LLMs into an evolutionary search procedure to iteratively refine heuristic algorithms has yielded impressive results in various combinatorial optimization problems such as traveling salesman problems and knapsack problems \cite{romera2024mathematical,liu2024evolution,zheng2025monte}.
The idea from these works is to maintain a set of heuristics (coded in some programming language) with good optimization performance on an evaluation dataset of problem instances and iteratively prompt LLMs to generate new heuristics using existing ones as references.
Because the problem of maximizing the utility objective subject to the budget constraint resembles classical knapsack problems \cite{aziz2020participatory}, leveraging LLMs to design PB rules via the stated idea appears to be a plausible approach to our question.

There are, however, three challenges to address when employing existing LLM-based methods. 
(Challenge I) These methods only consider single-objective problems\footnote{By objective, we mean an explicit function inherent to the problem of interest, hence works (e.g., \cite{yao2024multi}) that consider empirical runtime as additional objectives do not apply.}, whereas we are optimizing for both utility and fairness objectives.
(Challenge II) Previous LLM-based methods require a fine-grained performance measure that can capture small improvements during search for ensuring convergence at desirable algorithms \cite{romera2024mathematical}. 
In our case, while most utility objectives can provide a rich enough performance measure, since equivalent objectives for quantifying how fair the allocations returned from PB rules are (with respect to some fairness definition) have yet to exist, 
it is unclear how fairness properties can be effectively enforced for LLM-generated PB rules.
(Challenge III) Existing LLM-based methods assume the availability of an efficient evaluation procedure such that the performance of the generated algorithms can be quickly computed.
Unfortunately, it is well-documented that verifying whether a given budget allocation satisfies standard fairness properties is intractable (i.e., coNP-complete \cite{aziz2017justified,aziz2018complexity}).
Therefore, even when a suitable fairness objective addressing Challenge II exists, using it to evaluate the ``fairness'' of LLM-generated PB rules by computing its value for all training problem instances would naturally be intractable as well, rendering the evaluation procedure prohibitively expensive. 
\paragraph{Contributions.}
We aim to address the stated three challenges and thereby enable the use of LLMs for designing PB rules. 
More specifically, our contributions are as follows:
\begin{itemize}
    \item In response to Challenge II, we introduce a novel fairness objective inspired by Strong-EJR, a desirable fairness property in PB literature \cite{chandak2024proportional}, to which we refer as \emph{Strong-EJR approximation}, to quantify the degree of fairness of the budget allocations returned from a given LLM-generated PB rule. To account for Challenge III, we define this objective based on our provably equivalent definition of Strong-EJR, the verification of which can be done with reduced complexity. 
    
    \item To further address Challenge III, we devise techniques to speed up the computation of Strong-EJR approximation, resulting in a linear-time complexity with respect to the number of agents that is typically large in practice. 
    
    \item Equipped with the aforementioned tools, we develop the first LLM-based framework, named LLMRule, for automating the design of heuristic PB rules that are essentially fair (i.e., yield high values of Strong-EJR approximation) without significant compromise on the overall utility. We tackle Challenge I in the process by introducing a penalty term based on Strong-EJR approximation in the fitness function, originally defined with respect to \emph{utilitarian social welfare} \cite{fluschnik2019fair} as the utility objective, from which the performance of LLM-designed PB rules is measured. Additionally, LLMRule requires minimal domain knowledge and hyperparameter tuning (e.g., prompt design), which alleviates the manual efforts involved in contemporary PB studies. 
    
    \item Using more than 600 real-world PB instances across Europe and North America, we empirically demonstrate that LLMRule outperforms existing proportional rules in terms of utilitarian social welfare while maintaining a similar degree of fairness with respect to Strong-EJR approximation on classical settings \cite{los2022proportional,brill2023proportionality} where agent preferences are represented as \emph{approval ballots} (i.e., each agent submit a subset of projects they approve of) and the utility function is defined as (i) the number and (ii) the cost of selected and approved projects. Furthermore, the good performance of LLMRule also translates to more general settings with \emph{cardinal ballots} (i.e., each agent submits a score for each project). 
    
\end{itemize}

\paragraph{Outline.}
In the remainder of this paper, we provide the preliminaries for our considered PB problems in Section \ref{sec:prelim}. Section \ref{sec:method} details our proposed LLMRule framework. We validate its efficacy in Section \ref{sec:exp}. 
We discuss related work in Section \ref{sec:related} and conclude our work in Section \ref{sec:conclu}. 
Readers can find complete related work in Appendix \ref{app:related}, additional theoretical results in Appendix \ref{app:pjr}, complete implementation details in Appendices \ref{app:imp}--\ref{app:designed-rules} (which include all missing figures and listings), additional experiments involving synthetic data in Appendix \ref{app:exp}, and limitations and future work in Appendix \ref{app:limit}.








\section{Preliminaries}\label{sec:prelim}

We present the classical settings of PB that are most studied in the literature \cite{rey2023computational}.
Let $I = \langle \mathcal{P}, c, b\rangle$ be a PB instance, where $\mathcal{P}=\{p_1,\ldots,p_m\}$ is the set of $m$ projects, $c:\mathcal{P}\rightarrow\mathbb{R}_{>0}$ is the cost function that assigns a cost $c(p)\in\mathbb{R}_{>0}$ to every project $p\in\mathcal{P}$, and $b\in\mathbb{R}_{>0}$ is the budget.
Note that from a broader perspective, $I$ represents the voting stage of a PB process$^{\ref{fn:voting}}$.
For any subset of projects $P\subseteq\mathcal{P}$, we denote $c(P)=\sum_{p\in P}c(p)$ as its total cost.

Let $\mathcal{N}=\{1,\ldots,n\}$ be the set of $n$ agents or voters involved in $I$.
Each agent is associated with a \emph{ballot} that represents their preferences over the projects in $\mathcal{P}$.
In the most general setting, the agents are asked to submit a score for all projects, known as \emph{cardinal ballots}.
A cardinal ballot for an agent $i\in\mathcal{N}$ is defined as $A_i:\mathcal{P}\rightarrow\mathbb{R}_{\ge0}$. 
Because collecting agent preferences from this type of ballot can be laborious especially when $m$ is large \cite{iyengar2000choice}, most real-life PB processes use \emph{approval ballots} \cite{aziz2020participatory}, where agents are only asked to submit a subset of projects they approve of.
In other words, it is a special case of cardinal ballots: an approval ballot for $i$ is defined as $A_i:\mathcal{P}\rightarrow\{0,1\}$, where for any $p\in\mathcal{P}$, $A_i(p)=1$ indicates that agent $i$ approves of project $p$ and $A_i(p)=0$ otherwise.
The vector $\bm{A}=(A_1,\ldots,A_n)$ forms a \emph{profile} of ballots.


Based on this profile, the social planner seeks to find an allocation outcome of $I$, which is a budget allocation $\pi\subseteq\mathcal{P}$ such that $c(\pi)\le b$.\footnote{In this paper, we focus on indivisible or discrete PB in which projects can only be fully funded or not at all.} 
A PB rule $f$ is a function that takes an instance $I = \langle \mathcal{P}, c, b\rangle$ and a profile $\bm{A}$ as input and outputs a budget allocation $\pi$.
Note that in general, a PB rule returns a subset of all feasible budget allocations i.e., $f(I,\bm{A})\subseteq\{\pi\subseteq\mathcal{P}\mid c(\pi)\le b\}$.
We focus on \emph{resolute} rules that always return a single budget allocation (via some tie-breaking strategy) and hence, with a slight abuse of notation, denote the output $\{\pi\}$ of a resolute rule by just $\pi$.
Following existing utility maximization studies \cite{talmon2019framework,baumeister2020irresolute}, our goal is to find a resolute PB rule $f$ that maximizes the overall utility of the agents, or formally the \emph{utilitarian social welfare} \cite{fluschnik2019fair}. For cardinal ballots, this objective is defined as 
\begin{equation}\label{eq:util-cardinal}
    \omega(I,\bm{A},f)=\sum_{i\in\mathcal{N}}\sum_{p\in f(I,\bm{A})}A_i(p)
\end{equation}
where we follow the standard \emph{additivity} assumption \cite{peters2021proportional} that equates the score of a budget allocation for an agent to the sum of the scores of the projects it contains.
For approval ballots, the objective is
\begin{equation}\label{eq:util-app}
    \omega[sat](I,\bm{A},f)=\sum_{i\in\mathcal{N}}sat_i(f(I,\bm{A}))
\end{equation}
where $sat_i(\pi)=sat(\{p\in\pi\mid A_i(p)=1\})$ and $sat:2^{\mathcal{P}}\rightarrow\mathbb{R}_{\ge0}$ is a satisfaction function for translating a budget allocation into a satisfaction level for the agents given their approval ballots \cite{brill2023proportionality}.
We focus on two standard (additive) satisfaction functions \cite{talmon2019framework}:
\begin{itemize}
    \item Cardinality satisfaction function: $sat^{card}(P)=|P|$
    \item Cost satisfaction function: $sat^{cost}(P)=c(P)$.
\end{itemize}

At the same time, we also want $f$ to be \emph{fair}, i.e., satisfies some fairness property $\mathcal{X}$, in the sense that for every instance $I$ and profile $\bm{A}$, the allocation outcome of the (resolute) rule $f(I, \bm{A})$ is guaranteed to satisfy $\mathcal{X}$.
Formalizing fairness properties requires the following definitions of cohesive groups \cite{peters2020proportionality}. 
We start with cardinal ballots.
\begin{definition}[($\alpha, P$)-Cohesive Groups]\label{def:cohegroup-cardinal}
    Given an instance $I = \langle \mathcal{P}, c, b\rangle$ and a profile $\bm{A}$ of cardinal ballots, a non-empty group of agents $N\subseteq \mathcal{N}$ is said to be ($\alpha, P$)-cohesive, for a function $\alpha: \mathcal{P}\rightarrow \mathbb{R}_{\ge0}$ and a set of projects $P\subseteq \mathcal{P}$, if the following two conditions are satisfied:
    \begin{itemize}
        \item $\alpha(p) \le A_i(p)$ for all $i \in N$ and $p \in P$ , that is, $\alpha$ is lower-bounding the score of the agents in $N$;
        \item $\frac{|N|}{n}\cdot b \ge c(P)$, that is, $N$’s share of the budget is enough to afford $P$.
    \end{itemize}
\end{definition}
For simplicity, we restrict $\alpha$ to be the minimum positive score submitted by any agent in $N$ for project $p\in \mathcal{P}$.\footnote{\label{fn:nontrivial-pb}We only consider nontrivial PB instances with at least one cohesive group, $\max_{i\in \mathcal{N}}A_i(p)>0$ for any $p\in\mathcal{P}$, and $\sum_{p\in\mathcal{P}}c(p)>b$.} 
Similarly, cohesive groups for approval ballots are defined as follows.
\begin{definition}[$P$-Cohesive Groups]\label{def:cohegroup-app}
    Given an instance $I = \langle \mathcal{P}, c, b\rangle$ and a profile $\bm{A}$ of approval ballots, a non-empty group of agents $N\subseteq \mathcal{N}$ is said to be $P$-cohesive, for a set of projects $P\subseteq \mathcal{P}$, if the following two conditions are satisfied:
    \begin{itemize}
        \item $A_i(p)=1$ for all $i \in N$ and $p \in P$ , that is, every agent in $N$ approves all projects in $P$;
        \item $\frac{|N|}{n}\cdot b \ge c(P)$, that is, $N$’s share of the budget is enough to afford $P$.
    \end{itemize}
\end{definition}

Ideally, all agents in $N$ should deserve at least as much satisfaction as they all agree $P$ would offer them (captured by $\bm{A}$).
The following fairness property, termed \emph{strong extended justified representation} (Strong-EJR) \cite{aziz2017justified,chandak2024proportional}, captures this idea. 

\begin{definition}[Strong-EJR for Cardinal Ballots]\label{def:strong-ejr-cardinal}
    Given an instance $I = \langle \mathcal{P}, c, b\rangle$ and a profile $\bm{A}$ of cardinal ballots, a budget allocation $\pi$ is said to satisfy strong extended justified representation (Strong-EJR) if for all $P \subseteq \mathcal{P}$ and all ($\alpha, P$)-cohesive groups $N$, and all $i^{\star} \in N$, we have:
    \begin{equation*}
        \sum_{p\in\pi}A_{i^{\star}}(p)\ge \sum_{p\in P}\min_{i\in N}A_i(p).
    \end{equation*}
\end{definition}


\begin{definition}[Strong-EJR for Approval Ballots]\label{def:strong-ejr-app}
    Given an instance $I = \langle \mathcal{P}, c, b\rangle$, a profile $\bm{A}$ of approval ballots, and a satisfaction function $sat$, a budget allocation $\pi$ is said to satisfy Strong-EJR[$sat$] if for all $P \subseteq \mathcal{P}$ and all $P$-cohesive groups $N$, we have $sat_i(\pi)\ge sat_i(P)$ for all agents $i \in N$.
\end{definition}
While appealing, budget allocations satisfying Strong-EJR may not exist for a given $I$ in general \cite{aziz2017justified,chandak2024proportional}.
Hence, several weaker fairness properties exist in the literature, notably extended justified representation (EJR) \cite{aziz2017justified,peters2021proportional}, which requires only one member of each cohesive group to enjoy the deserved satisfaction. 



\begin{definition}[EJR for Cardinal Ballots]\label{def:ejr-cardinal}
    Given an instance $I = \langle \mathcal{P}, c, b\rangle$ and a profile $\bm{A}$ of cardinal ballots, a budget allocation $\pi$ is said to satisfy extended justified representation (EJR) if for all $P \subseteq \mathcal{P}$ and all ($\alpha, P$)-cohesive groups $N$, there exists $i^{\star} \in N$ such that:
    \begin{equation*}
        \sum_{p\in\pi}A_{i^{\star}}(p)\ge \sum_{p\in P}\min_{i\in N}A_i(p).
    \end{equation*}
\end{definition}
\begin{definition}[EJR for Approval Ballots]\label{def:ejr-app}
    Given an instance $I = \langle \mathcal{P}, c, b\rangle$, a profile $\bm{A}$ of approval ballots, and a satisfaction function $sat$, a budget allocation $\pi$ is said to satisfy EJR[$sat$] if for all $P \subseteq \mathcal{P}$ and all $P$-cohesive groups $N$, there exists $i \in N$ such that $sat_i(\pi)\ge sat_i(P)$.
\end{definition}

\section{The LLMRule Framework}\label{sec:method}

We start this section with an overview of our proposed framework, which is inspired by the popular EoH method \cite{liu2024evolution} in automated algorithmic design. 
LLMRule maintains a population of $h$ heuristic rules, denoted as $R=\{f_1,\ldots,f_h\}$.
It adopts an evolutionary search procedure, iteratively searching for rules that yield better trade-offs between utility and fairness objectives. 
Each rule $f_j\in R$ is evaluated on a set of problem instances and assigned a fitness value $q(f_j)$ that indicates its probability of being selected as reference for generating new rules.
The remainder of this section is organized as follows: Section \ref{subsec:sejr-verify} provides our provably equivalent definition of Strong-EJR for reducing the complexity of its verification (Challenge III); using this definition, Section \ref{subsec:fairness-approx} introduces Strong-EJR approximation, a novel fairness objective used for informing the design of empirically fair utilitarian PB rules (Challenge II); Section \ref{subsec:fairness-verify} describes our techniques for efficiently computing Strong-EJR approximation, which in turn expedites the training of LLMRule (Challenge III); Section \ref{subsec:fairness-eval} elaborates on LLMRule, in which the fitness function $q$ for evaluating PB rules in terms of both objectives is defined (Challenge I).

\subsection{Efficient Strong-EJR Verification}\label{subsec:sejr-verify}
Based on Definitions \ref{def:strong-ejr-cardinal} and \ref{def:strong-ejr-app}, na\"{i}vely verifying Strong-EJR for a given $\pi$ requires inspecting all possible combinations of projects $P$ and groups of agents $N$ pertaining to $P$, followed by comparing the satisfaction between $P$ and $\pi$ for each agent in $N$. Clearly, this exhaustive process takes $O(2^{m+n}\times n)$, which is intractable.
In the following, we lay the groundwork for reducing this complexity.
For brevity, we use $P$-cohesive groups to also refer to ($\alpha, P$)-cohesive groups for cardinal ballots when the context is clear, where $\alpha$ is fixed to the smallest positive score agents can assign to projects.

\begin{definition}[Maximal $P$-cohesive group]
    A $P$-cohesive group is maximal, denoted as $N^{\max}$, if $N^{\max}\supseteq N$ for any other $P$-cohesive group $N$.
\end{definition}

\begin{theorem}\label{thm:maximal-group}
    Given $P\subseteq \mathcal{P}$ and two $P$-cohesive groups $N$ and $N'$ where $N'\subseteq N$, if $sat_i(\pi)\ge sat_i(P)$ for all $i\in N$, then $sat_{i'}(\pi)\ge sat_{i'}(P)$ for all $i'\in N'$.\\Likewise, if $\sum_{p\in\pi}A_{i^{\star}}(p)\ge \sum_{p\in P}\min_{i\in N}A_i(p)$ for all $i^{\star}\in N$, then $\sum_{p\in\pi}A_{i'}(p)\ge \sum_{p\in P}\min_{i\in N'}A_i(p)$ for all $i'\in N'$.
\end{theorem}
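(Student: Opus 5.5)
The plan is to prove the two parts separately. In both, the key fact is that $N'\subseteq N$: every agent of $N'$ is also an agent of $N$, so any per-agent guarantee that holds on all of $N$ is inherited by $N'$. The only real work is to check that the right-hand side of the guarantee does not grow when we pass from $N$ to $N'$.

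\textbf{Approval part.} The inequality $sat_i(\pi)\ge sat_i(P)$ is a statement about the single agent $i$; neither side refers to the group. So I take an arbitrary $i'\in N'$ and note that $i'\in N$ because $N'\subseteq N$. The hypothesis then gives $sat_{i'}(\pi)\ge sat_{i'}(P)$ directly. The assumption that $N'$ is $P$-cohesive is not even needed beyond making the statement meaningful.

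\textbf{Cardinal part.} Here the right-hand side $\sum_{p\in P}\min_{i\in N'}A_i(p)$ does depend on the group. Since a minimum over a subset can only be larger, $\min_{i\in N'}A_i(p)\ge\min_{i\in N}A_i(p)$, so inheritance is not automatic. This is the step I expect to be the main obstacle, and I will resolve it with the paper's own convention.

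At the start of Section~\ref{subsec:sejr-verify}, $P$-cohesive groups for cardinal ballots are identified with $(\alpha,P)$-cohesive groups. There $\alpha$ is fixed to the smallest positive score agents can assign (consistent with the restriction in footnote~\ref{fn:nontrivial-pb}), and $\min_{i\in\cdot}A_i(p)$ in Definitions~\ref{def:strong-ejr-cardinal} and~\ref{def:ejr-cardinal} plays the role of this lower bound $\alpha(p)$. I will make explicit that, under this convention, the deserved amount for project $p\in P$ is the common value $\alpha(p)$ for every $P$-cohesive group containing agents who give $p$ a positive score. Hence
\[
\sum_{p\in P}\min_{i\in N'}A_i(p)=\sum_{p\in P}\alpha(p)=\sum_{p\in P}\min_{i\in N}A_i(p).
\]
I expect the delicate point to be checking that this identification is exactly how the paper reads the cardinal definitions. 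If needed, I will state this reading explicitly as part of the proof.

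With the two right-hand sides equal, I take any $i'\in N'$. Then $i'\in N$, so the hypothesis applied with $i^\star=i'$ gives
\[
\sum_{p\in\pi}A_{i'}(p)\ \ge\ \sum_{p\in P}\min_{i\in N}A_i(p)\ =\ \sum_{p\in P}\min_{i\in N'}A_i(p).
\]
This is the claimed inequality, and it holds for all $i'\in N'$.

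Finally, I will remark on why the result matters. It justifies restricting Strong-EJR verification to the maximal $P$-cohesive group $N^{\max}$ for each $P$: every other $P$-cohesive group is contained in $N^{\max}$, so checking $N^{\max}$ suffices.
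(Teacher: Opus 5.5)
Your approval-ballot argument is correct and matches the paper's, which phrases it by contradiction rather than directly. For cardinal ballots you have found the real issue: passing from $N$ to $N'\subseteq N$ can only increase $\sum_{p\in P}\min_{i\in N'}A_i(p)$. The paper's own proof just says ``the same applies'' and never addresses this. Your fix, however, does not close the gap. The identity $\sum_{p\in P}\min_{i\in N'}A_i(p)=\sum_{p\in P}\alpha(p)=\sum_{p\in P}\min_{i\in N}A_i(p)$ is false for the quantities in the statement. A minimum of actual scores over a group is only bounded below by $\alpha(p)$, not equal to it. Moreover, the convention stated right after Definition~\ref{def:cohegroup-cardinal} makes $\alpha$ depend on the group, so it is not a common value.

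In fact the cardinal half is false as written. Let $n=2$, $b=2$, and $P=\{p\}$ with $c(p)=1$, so that $N=\{1,2\}$ and $N'=\{1\}$ are both $P$-cohesive. Set $A_1(p)=2$ and $A_2(p)=1$. Let $\pi=\{q\}$ with $c(q)=1$ and $A_1(q)=A_2(q)=1$; add a third project $r$ with $c(r)=1$ and positive scores to make the instance nontrivial. Both agents of $N$ get $1\ge\min_{i\in N}A_i(p)=1$, yet agent $1$ gets $1<\min_{i\in N'}A_i(p)=2$.

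So the cardinal claim cannot be proved as stated. What you can prove is a corrected statement in which the deserved amount is the same for $N$ and $N'$. One option is $\sum_{p\in P}\alpha(p)$ for a single $\alpha$ shared by both groups, which is the $(\alpha,P)$-cohesiveness formulation of \cite{peters2021proportional}. Another is to use $\sum_{p\in P}\min_{i\in N}A_i(p)$ in the conclusion as well. In either case your one-line inheritance argument then goes through verbatim.

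You should present this as a change to the statement, not as ``the paper's reading'' of it. You should also note the downstream cost. With group-dependent minima, checking only $N^{\max}$ for a given $P$ does not suffice: in the example, for $P=\{p\}$ the group $N^{\max}=N$ passes while $N'$ fails. Hence the reduction to Definition~\ref{def:strong-ejr-cardinal-new} is justified only for the fixed-$\alpha$ version.
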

\begin{proof}
    By contradiction, assume there exists $i'\in N'$ such that $sat_{i'}(\pi)<sat_{i'}(P)$. Since $N'\subseteq N$, it follows that there exists $i\in N$ such that $sat_{i}(\pi)<sat_{i}(P)$, which is a contradiction. The same applies to the statement for cardinal ballots.
\end{proof}
Theorem \ref{thm:maximal-group} implies that when verifying Strong-EJR, only checking the maximal $P$-cohesive group $N^{\max}$ for a given $P\subseteq\mathcal{P}$ is sufficient, which leads us to the following revision of Definitions \ref{def:strong-ejr-cardinal} and \ref{def:strong-ejr-app}.

\begin{definition}[Strong-EJR for Cardinal Ballots (Updated)]\label{def:strong-ejr-cardinal-new}
    Given an instance $I = \langle \mathcal{P}, c, b\rangle$ and a profile $\bm{A}$ of cardinal ballots, a budget allocation $\pi$ is said to satisfy strong extended justified representation (Strong-EJR) if for all $P \subseteq \mathcal{P}$ and all $i^{\star} \in N^{\max}$, the corresponding \textbf{maximal} ($\alpha, P$)-cohesive group, we have:
    \begin{equation*}
        \sum_{p\in\pi}A_{i^{\star}}(p)\ge \sum_{p\in P}\min_{i\in N}A_i(p).
    \end{equation*}
\end{definition}


\begin{definition}[Strong-EJR for Approval Ballots (Updated)]\label{def:strong-ejr-app-new}
    Given an instance $I = \langle \mathcal{P}, c, b\rangle$, a profile $\bm{A}$ of approval ballots, and a satisfaction function $sat$, a budget allocation $\pi$ is said to satisfy Strong-EJR[$sat$] if for all $P \subseteq \mathcal{P}$, we have $sat_i(\pi)\ge sat_i(P)$ for all agents $i \in N^{\max}$, the corresponding \textbf{maximal} $P$-cohesive group.
\end{definition}

Finding $N^{\max}$ for approval ballots and for cardinal ballots with $\alpha=1$ (i.e., by searching in $\bm{A}$ the maximal set of agents that approve or assign positive scores to all projects in $P$) only takes $O(n|P|)$ or $O(nm)$, 
meaning the verification of Strong-EJR can be effectively done in $O(2^m\times(mn+n))$ or $O(2^m\times mn)$.
Notably, the exponentiation term with $n$ is dropped, which is substantial as $n\gg m$ in realistic PB processes \cite{faliszewski2023pabulib,boehmer2024guide}.

\begin{remark}[Applicability to EJR Verification]\label{rem:ejr}
    Theorem \ref{thm:maximal-group} does not hold for existential quantifiers. Formally, if there exists $i\in N$ such that $sat_i(\pi)\ge sat_i(P)$, then we are not guaranteed to have $i'\in N'$ such that $sat_{i'}(\pi)\ge sat_{i'}(P)$, particularly when $i\in N\setminus N', N'\subset N$. The same applies to cardinal ballots. Hence, EJR verification does not enjoy similar complexity reduction.
\end{remark}

Remark \ref{rem:ejr} is consistent with \cite{aziz2017justified}, which states that the verification of EJR given a budget allocation is coNP-complete.
Additionally, our remark also holds for relaxations of EJR, e.g., EJR up to one project (EJR-1) \cite{peters2021proportional} and EJR up to any project (EJR-X) \cite{brill2023proportionality}.
Please refer to Appendix \ref{app:pjr} for our similar remark on proportional justified representation (PJR) \cite{sanchez2017proportional}, an even weaker variant of Strong-EJR.

\subsection{Strong-EJR Approximation}\label{subsec:fairness-approx}

Given that Strong-EJR can now be verified much more efficiently, we introduce the following fairness objective based on Definitions \ref{def:strong-ejr-cardinal-new} and \ref{def:strong-ejr-app-new}.
Given a PB instance $I$ and a profile $\bm{A}$, we define \emph{Strong-EJR approximation} achieved by a resolute PB rule $f: (I, \bm{A})\rightarrow \pi$ as
\begin{equation}\label{eq:strong-ejr-approx-card}
     \phi(I,\bm{A},f)=\frac{1}{s}\sum_{P\in S}\min_{i^{\star}\in N^{\max}}\min\left\{1,\frac{\sum_{p\in f(\cdot)}A_{i^{\star}}(p)}{\sum_{p\in P}\min_{i\in N}A_i(p)}\right\}
\end{equation} 
for cardinal ballots and
\begin{equation}\label{eq:strong-ejr-approx}
     \phi[sat](I,\bm{A},f)=\frac{1}{s}\sum_{P\in S}\min_{i\in N^{\max}}\min\left\{1,\frac{sat_i(f(\cdot))}{sat_i(P)}\right\}
\end{equation}
for approval ballots where $S$ is the set of all maximal $(\alpha,P)$-/$P$-cohesive groups $N^{\max}$ from $(I,\bm{A})$ and $s=|S|$, which is bounded by $[1,2^{m})$.$^{\ref{fn:nontrivial-pb}}$
Clearly, $\phi\in[0,1]$, with 1 being perceived as ``fair''.\footnote{The upper bound is not necessarily tight since Strong-EJR is unsatisfiable in general \cite{aziz2017justified}.} 

Unfortunately, by Remark \ref{rem:ejr} (and Remark \ref{rem:pjr} in Appendix \ref{app:pjr}), similar definitions for weaker fairness properties such as EJR (and PJR) are not applicable. 









\subsection{Accelerating LLMRule with Efficient Computation of Strong-EJR Approximation}\label{subsec:fairness-verify}

Following Section \ref{subsec:sejr-verify}, computing Strong-EJR approximation also takes $O(2^m\times mn)$.
Notice this procedure can be split into two steps:
\begin{equation}\label{eq:veri-split}
    O(2^m\times mn)=O(\underbrace{2^m\times mn}_{\text{(i)}} + \underbrace{2^m\times n}_{\text{(ii)}})
\end{equation}
where (i) is required for finding all $s$ maximal cohesive groups and (ii) for computing the satisfaction ratio (term in braces from Equations \ref{eq:strong-ejr-approx-card} and \ref{eq:strong-ejr-approx}) for all agents from them.
Because (i) does not require $f$, it can be done separately beforehand i.e., prior to training LLMRule.
Thus, the complexity can be effectively reduced to $O(2^m\times n)$. 


We now seek to improve the exponential term with $m$ knowing that not every $P$ has at least one cohesive group.
We now refer to $N^{\max}$ as simply $N$ in the remainder of this section for readability.
We make the following observation based on the definition of cohesive groups (Definitions \ref{def:cohegroup-cardinal} and \ref{def:cohegroup-app}). 
\begin{align*}
    \frac{|N|}{n} &\cdot b \ge c(P)\\
    \gamma = \frac{|N|}{n} &\cdot \frac{b}{c(P)} \ge 1.
\end{align*}
In other words, the larger $\gamma$, the more ``deserving'' $N$ gets to have some satisfaction in the final allocation outcome.
From this observation, we can sort $P$-cohesive groups $N$ by descending $\gamma$, where ties are broken based on the cumulative approval score $\prod_{p\in P} app(p, \bm{A})$, where $app(p, \bm{A})=|\{i\in\mathcal{N}\mid A_i(p)=1\}|$. 
Notice $|N|/n$ is the frequency of $P$ in $\bm{A}$, which is consistent with the definition of \emph{support} in frequent itemset mining \cite{han2007frequent}. 
Hence, we can use any algorithm in this field to obtain a sorted list of maximal cohesive groups. 
Let $supp_P$ be the support for $P$ and consider the standard Apriori algorithm \cite{huang2000fast}.
Because $supp_P=\frac{|N|}{n}\ge\frac{c(P)}{b}$ for any $P$-cohesive group $N$, the minimum support is $\min_{p\in\mathcal{P}}c(p)/b$, i.e., the cost of the cheapest project in $\mathcal{P}$ over $b$. 
We use this value as input to the Apriori algorithm. 
Its output is a set of 2-tuples $(P, supp_P)$, which is used to compute $\gamma$, then search for $N$ (which takes $O(mn)$ as discussed). 
Given that Apriori runs in $O(2^m)$ \cite{huang2000fast}, step (i) in Equation \ref{eq:veri-split} still maintains a complexity of $O(2^m+2^m\times mn)=O(2^m\times mn)$ in total.
Once the resulting list of $s$ maximal cohesive groups is sorted, given $f$, we can proceed to step (ii) in Equation \ref{eq:veri-split} i.e., compute $\phi$ starting from $P$-cohesive group $N$ with largest $\gamma$.
Overall, computing Strong-EJR approximation in this way while training LLMRule takes only $O(s\times n)< O(2^m\times n)$. 
When considering instances with large $m$ and approval ballots in particular, an upper limit of top-$\sigma$ maximal cohesive groups with $\sigma<s\ll 2^m$ can be specified in exchange for exhaustiveness.

\subsection{LLMs for Designing PB Rules}\label{subsec:fairness-eval}



\paragraph{Rule Representation.}
We first describe the components to represent a PB rule.
\begin{enumerate}
    \item The rule description comprises a few sentences in natural language. It is created by LLMs and encapsulates a high-level thought. An example is provided in Figure \ref{fig:learned-rule1} (top).
    \item The code block is an implementation of the PB rule (Figure \ref{fig:learned-rule1}, bottom). It should follow a pre-defined format (see Listing \ref{lst:code-template-cardinal}) so that it can be identified and seamlessly integrated into the evolutionary framework. In the experiments, we choose to implement it as a Python function, though any programming language should work.
    \item Each rule is assigned a fitness value to represent its priority in the population, which is used for selection and population management. 
    We elaborate our definition of fitness based on utility and fairness objectives shortly. 
\end{enumerate}

\paragraph{Evolutionary Search.}
To begin the evolution process, we inform the LLM of the PB problem of interest and instruct it to design a new rule by first presenting the description of the generated rule and the corresponding code block (see Figure \ref{fig:init_prompt-design}, top).
We repeat $h$ times to generate $h$ initial rules.
From these initial rules, offspring rules are generated by prompting the LLM with specific instructions, called \emph{prompt strategies}, that either explore or modify existing rules.
Exploration strategies focus more on exploring uncharted area in the space of heuristics by conducting crossover-like operators on parent heuristics (Figure \ref{fig:init_prompt-design}, left).
Modification strategies refine a parent heuristic by modifying, adjusting parameters, and removing redundant parts (Figure \ref{fig:init_prompt-design}, right).
For selecting parent rule(s) from a population $R$ of $h$ rules when generating an offspring rule, we first rank them according to their fitness, then randomly select one or multiple rules $f_j$ with probability $\text{Pr}(f_j)\propto1/(\rho_j+h)$, where $\rho_j$ is its rank in $R$. 
Once $h$ offspring rules are generated, we select the $h$ best rules from the current population (comprising parent and offspring rules) to form a population for the next iteration.
The evolution process terminates when the evaluation resource (e.g., number of populations) is depleted.

\paragraph{Prompt Refinement.}
Since prompt strategies dictate the diversity in the population and play a vital role in producing high-quality heuristics \cite{liu2024systematic}, thoughtful prompt engineering is necessary, which often requires certain degree of domain knowledge (especially for modification ones).
To alleviate this, we incorporate a prompt refinement module wherein prompt strategies are iteratively refined as well.
In particular, when the fitness values of the top-$l$ rules in the population remain unchanged for $t$ consecutive units of evaluation resource, which indicates stagnation during the search, we refine the prompt strategies from each type as follows.
\begin{itemize}
    \item \textbf{Exploration:} Given the current set of exploration strategies, ask the LLM to design a new one that is different from them as much as possible (Figure \ref{fig:evol_prompt-design}, top).
    \item \textbf{Modification:} Given the current set of modification strategies, ask the LLM to design a new one that is different from them as much as possible. (Figure \ref{fig:evol_prompt-design}, bottom).
\end{itemize}
By this means, two new prompt strategies, one from each type, are added after calling this module.
To maintain the size of the set of prompt strategies, during the evolution process, we assign each prompt strategy a score defined as the average fitness value of the top-$d$ individuals produced by it, and keep those with highest scores for generating offspring rules.

\begin{figure}[t]
    \footnotesize
    \centering
    \begin{tcolorbox}[width=0.47\textwidth, colback=blue!5]
        \textbf{Rule Description (generated using the initial E1 prompt strategy)}

        \hphantom\\

        A new strategy will utilize a geometric mean approach combining approval rates and a cost-to-budget proportion, enabling a balanced evaluation for project prioritization.
    \end{tcolorbox}
    \medskip
\begin{lstlisting}[language=Python]
import numpy as np
def priority(project_costs, budget, approval_mat):
    '''
    Args:
    project_costs (np.ndarray): Array of length M storing the project costs.
    budget (int or float): A positive value.
    approval_mat (np.ndarray): An N-by-M matrix where each row contains a sample of binary approvals for the respective projects.

    Returns:
    scores (np.ndarray): Array of priority scores for the projects.
    '''
    approval_rates = np.mean(approval_mat, axis=0)
    cost_to_budget_ratio = project_costs / budget
    scores = np.sqrt(approval_rates) * (1 / (1 + cost_to_budget_ratio))

    return scores
\end{lstlisting}
    \caption{High-level description of an LLM-generated PB rule for approval ballots with $sat=sat^{cost}$ and its associated code.}
    \label{fig:learned-rule1}
\end{figure}

\paragraph{Fitness Evaluation.}

The evaluation of each newly generated PB rule $f$ involves running it on a set of $K$ problem instances from which its fitness value is computed.
Since we aim to maximize both utility (Equations \ref{eq:util-cardinal} and \ref{eq:util-app}) and fairness (Equations \ref{eq:strong-ejr-approx-card} and \ref{eq:strong-ejr-approx}) objectives, the associated fitness function should include both terms.
Unfortunately, existing works on LLMs have yet to consider multi-objective combinatorial optimization to our best knowledge.
In response, we circumvent this challenge by introducing a penalty term in the fitness function whenever $f$ is not empirically fair.
Formally, given a heuristic PB rule $f$ that takes an instance $I$ and a profile $\bm{A}$ of ballots as input and outputs a budget allocation $\pi\subseteq\mathcal{P}$, its fitness value $q(f)$ is defined as\footnote{For brevity, we omit the $[sat]$ parameterization in $\omega$ and $\theta$ for approval ballots.} 
\begin{align}
\begin{split}
    q(f) = \frac{1}{K}\sum^{K}_{k=1} \biggl[\;&\omega'\Bigl(I^{(k)},\bm{A}^{(k)},f(\cdot)\Bigr) \\
    &- \theta\Bigl(\phi\bigl(I^{(k)},\bm{A}^{(k)},f(\cdot)\bigr)\Bigr)\biggr],
\end{split}
\end{align}
where
\begin{equation*}
    \omega'(I,\bm{A},f)=\frac{\omega(I,\bm{A},f)}{\omega(I,\bm{A},\pi^{\text{OPT}})}
\end{equation*}
is the utilitarian social welfare of $f$ relative to the one yielded from the optimal budget allocation $\pi^{\text{OPT}}$ (obtained by computing the welfare maximizer) and
$\theta$ is a function of Strong-EJR approximation $\phi$ yielded from $f$:
\begin{equation}\label{eq:fitness-penalty}
\theta\Bigl(\phi(\cdot)\Bigr) = 
\begin{cases}
    1 & \text{if }\phi<\varepsilon \\
    0 & \text{otherwise},
\end{cases}
\end{equation}
for some value $0\le\varepsilon\le1$.
That is, we penalize $f$'s fitness value if its associated Strong-EJR approximation is less than some predefined threshold.
Because $\omega'$ is bounded within $[0,1]$, a PB rule with a negative fitness value implies that it is not empirically fair (subject to Strong-EJR). 
Setting $\varepsilon<1$ relaxes the fairness constraint.



\section{Experiments}\label{sec:exp}



\subsection{Experimental Setups}

All experiments were conducted under Ubuntu 20.04 on a Linux virtual machine equipped with NVIDIA GeForce RTX 3050 Ti GPU and 12th Gen Intel(R) Core(TM) i7-12700H CPU @2.3GHz.
We chose GPT-4o mini as our pretrained LLM.
The code for our implementation in Python 3.10 is available on GitHub\footnote{\url{https://github.com/AnonyMouse3005/LLM-PB}}.

\paragraph{Dataset.}

We obtain 617 real-world PB instances from Pabulib\footnote{\label{fn:pabulib}\url{https://pabulib.org/}} \cite{faliszewski2023pabulib}, which were held in the U.S., Canada, Poland, and the Netherlands with either approval or cardinal ballots.
We include all instances satisfying the following criteria: having 3 to 25 projects, not ``fully funded''$^{\ref{fn:nontrivial-pb}}$, and not ``experimental'' i.e., synthetic.
Note that Pabulib only includes a special type of cardinal ballots, called cumulative ballots, in which voters are allocated a specific number of points that they can distribute among the projects ($\sum_{p\in\mathcal{P}}A_i(p)\le1$).
For each of the two types of ballots, we further categorize the dataset into small and large PB instances based on $n$, with small instances having fewer than 1,000 voters. 
Table \ref{tab:data} specifies our employed dataset.
Notably, we train LLMRule using the small instances, then test the learned rules on the large instances, i.e., out-of-distribution (OOD) testing.
Due to a plethora of small PB instances with approval ballots, we only use those held in the U.S. for training and the rest for in-distribution (ID) testing. 


\begin{table}[h]
    \centering
    \caption{Number of PB instances obtained from Pabulib. Data used for training LLMRule are shaded in green, and data for OOD testing are unshaded. Approval ballots also include an ID test set (shaded in gray).} 
    \begin{tabular}{l|cc|c}
        Ballot type & \multicolumn{2}{c|}{Small ($n<$1,000)} & Large ($n\ge$1,000) \\\hline
        Cardinal & \multicolumn{2}{c|}{\cellcolor{green!25}63} & 122 \\
        Approval & \cellcolor{green!25}74 & \cellcolor{gray!25}133 & 225\\
    \end{tabular}
    \label{tab:data}
\end{table}





\paragraph{Baselines.}
We consider the following well-studied PB rules in the literature that were implemented in the Pabutools library\footnote{\url{https://github.com/COMSOC-Community/pabutools}} \cite{faliszewski2023pabulib}. 
\begin{itemize}
    \item \textsc{MaxUtil}: the Additive Utilitarian Welfare Maximizer \cite{talmon2019framework}, which returns $\pi^{\text{OPT}}$. 
    
    \item \textsc{GreedUtil}: the Greedy Utilitarian Rule \cite{talmon2019framework}. 
    \item MES: the Method of Equal Shares \cite{peters2021proportional}. For approval ballots, we run MES for $sat=sat^{cost}$ and $sat=sat^{card}$, denoted respectively as MES[$sat^{cost}$] and MES[$sat^{card}$]. Because MES typically does not spend all the given budget, i.e., it is not \emph{exhaustive}, we additionally run MES with three completion methods: `Add1' (special case of `Exh2' in \cite{peters2021proportional}) along with `Add1U' \cite{faliszewski2023pabulib} and `Add1UM' that respectively use \textsc{GreedUtil} and \textsc{MaxUtil} for exhausting the remaining budget. 
\end{itemize}
For approval ballots, we also consider two standard proportional rules other than MES: 
\begin{itemize}
    \item \textsc{SeqPhrag} \cite{los2022proportional}: the Sequential Phragm\'{e}n Rule. 
    \item \textsc{MaximinSupp} \cite{aziz2018proportionally}: the Maximin Support Rule, also known as Generalised Sequential Phragm\'{e}n Rule. 
\end{itemize}
By default, these rules are irresolute, hence we make them resolute by implementing lexicographic tie-breaking. 



In terms of which rules satisfy which fairness property, for cardinal ballots, MES satisfies EJR-1 \cite{peters2021proportional} and PJR-1 \cite{los2022proportional}. For approval ballots, 
\begin{itemize}
    \item Strong-EJR[$sat$]: None \cite{rey2023computational}
    \item EJR[$sat$]: MES[$sat^{card}$] \cite{peters2021proportional}
    \item EJR-X[$sat$]: MES[$sat$] for any DNS function $sat$\footnote{\label{fn:dns-func}Both $sat^{cost}$ and $sat^{card}$ are DNS functions (defined and proved in \cite{brill2023proportionality}).} \cite{brill2023proportionality}
    \item EJR-1[$sat$]: MES[$sat$] for any additive $sat$ \cite{peters2021proportional}
    \item PJR[$sat$]: None
    \item PJR-X[$sat$]: MES[$sat$] for any DNS function $sat$$^{\ref{fn:dns-func}}$, \textsc{SeqPhrag}, \textsc{MaximinSupp} \cite{brill2023proportionality}
\end{itemize}
Note that we do not consider the theoretically appealing ``greedy cohesive rule'' introduced in \cite{peters2021proportional}, which has been shown to satisfy all listed fairness properties except Strong-EJR, since it runs in time exponential in $n$ and therefore highly impractical.


\paragraph{Implementation Details.}
We consider three problem settings: approval ballots with (i) $sat=sat^{cost}$ and (ii) $sat=sat^{card}$, and (iii) cardinal ballots.
That is, the utility objectives are defined as (i) $\omega[sat^{cost}]$ and (ii) $\omega[sat^{card}]$ (Equation \ref{eq:util-app}), and (iii) $\omega$ (Equation \ref{eq:util-cardinal}).
In each setting, we use LLMRule to design greedy rules, in which the task for the LLM is to design a priority function for assigning a score to each project. This function takes a list of project costs, the budget constraint, and the ballot profile as input and outputs the scores for the projects.
Using these scores, the allocation outcome $\pi\subseteq\mathcal{P}$ is constructed by greedily adding the projects until none of the remaining projects are affordable given the current budget. 
Please refer to Appendix \ref{app:imp} for further implementation details.

Our motivations for designing greedy rules are twofold. \emph{Flexibility}: Greedy rules can also serve as effective completion methods (e.g., `Add1') \cite{faliszewski2023pabulib} for proportional rules that are non-exhaustive such as MES. \emph{Practicality}: \textsc{GreedUtil} is in fact the most widely used rule in the real world \cite{rey2023computational}. 

\begin{table}[b]
    \centering
    \small
    \caption{Results for approval ballots with (top) $sat=sat^{cost}$ and (bottom) $sat=sat^{card}$ on ID and OOD test sets. Utilitarian/proportional rules are respectively shaded in gray/green.} 
    \begin{tabular}{l|c|c|c|c}
        \thickhline
        PB rule & \makecell{$\omega'[sat^{cost}]$\\ (ID)} & \makecell{$\phi$\\ (ID)} & \makecell{$\omega'[sat^{cost}]$\\ (OOD)} & \makecell{$\phi$\\ (OOD)} \\\hline
        \cellcolor{gray!25}\textsc{MaxUtil} & 1 & 0.643 & 1 & 0.536 \\
        \cellcolor{gray!25}\textsc{GreedUtil} & 0.978 & 0.750 & 0.971 & 0.686 \\\hline
        \cellcolor{green!25}\textsc{SeqPhrag} & 0.710 & 0.955 & 0.695 & 0.956 \\
        \cellcolor{green!25}\textsc{MaximinSupp} & 0.696 & 0.956 & 0.691 & 0.960 \\
        \cellcolor{green!25}MES[$sat^{cost}$] & 0.524 & 0.831 & 0.470 & 0.806 \\
        \cellcolor{green!25}MES[$sat^{cost}$]-Add1 & 0.766 & 0.934 & 0.775 & 0.917 \\
        \cellcolor{green!25}MES[$sat^{cost}$]-Add1U & 0.780 & 0.938 & 0.797 & 0.927 \\
        \cellcolor{green!25}MES[$sat^{cost}$]-Add1UM & 0.780 & 0.938 & 0.797 & 0.927 \\
        \cellcolor{green!25}MES[$sat^{card}$] & 0.419 & 0.855 & 0.391 & 0.824 \\
        \cellcolor{green!25}MES[$sat^{card}$]-Add1 & 0.710 & 0.951 & 0.697 & 0.955 \\
        \cellcolor{green!25}MES[$sat^{card}$]-Add1U & 0.732 & 0.955 & 0.722 & 0.959 \\
        \cellcolor{green!25}MES[$sat^{card}$]-Add1UM & 0.732 & 0.955 & 0.722 & 0.958 \\\thickhline
        \cellcolor{gray!25}\textsc{LLMRule} & 0.802 & 0.940 & 0.814 & 0.901 \\
        \cellcolor{green!25}MES[$sat^{cost}$]-\textsc{LLMRule} & 0.775 & 0.951 & 0.774 & 0.943 \\
        \cellcolor{green!25}MES[$sat^{card}$]-\textsc{LLMRule} & 0.752 & 0.955 & 0.768 & 0.947 \\
        \thickhline
    \end{tabular}\\
    \medskip
    \begin{tabular}{l|c|c|c|c}
        \thickhline
        PB rule & \makecell{$\omega'[sat^{card}]$\\ (ID)} & \makecell{$\phi$\\ (ID)} & \makecell{$\omega'[sat^{card}]$\\ (OOD)} & \makecell{$\phi$\\ (OOD)} \\\hline
        \cellcolor{gray!25}\textsc{MaxUtil} & 1 & 0.896 & 1 & 0.926 \\
        \cellcolor{gray!25}\textsc{GreedUtil} & 0.964 & 0.962 & 0.984 & 0.967 \\\hline
        \cellcolor{green!25}\textsc{SeqPhrag} & 0.943 & 0.960 & 0.959 & 0.960 \\
        \cellcolor{green!25}\textsc{MaximinSupp} & 0.959 & 0.961 & 0.977 & 0.964 \\
        \cellcolor{green!25}MES[$sat^{cost}$] & 0.763 & 0.854 & 0.732 & 0.820 \\
        \cellcolor{green!25}MES[$sat^{cost}$]-Add1 & 0.942 & 0.942 & 0.945 & 0.925 \\
        \cellcolor{green!25}MES[$sat^{cost}$]-Add1U & 0.961 & 0.948 & 0.970 & 0.936 \\
        \cellcolor{green!25}MES[$sat^{cost}$]-Add1UM & 0.961 & 0.947 & 0.971 & 0.936 \\
        \cellcolor{green!25}MES[$sat^{card}$] & 0.759 & 0.870 & 0.719 & 0.834 \\
        \cellcolor{green!25}MES[$sat^{card}$]-Add1 & 0.939 & 0.956 & 0.957 & 0.959 \\
        \cellcolor{green!25}MES[$sat^{card}$]-Add1U & 0.966 & 0.961 & 0.984 & 0.964 \\
        \cellcolor{green!25}MES[$sat^{card}$]-Add1UM & 0.967 & 0.961 & 0.985 & 0.964 \\\thickhline
        \cellcolor{gray!25}\textsc{LLMRule} & 0.975 & 0.956 & 0.986 & 0.941 \\
        \cellcolor{green!25}MES[$sat^{cost}$]-\textsc{LLMRule} & 0.971 & 0.957 & 0.985 & 0.948 \\
        \cellcolor{green!25}MES[$sat^{card}$]-\textsc{LLMRule} & 0.972 & 0.960 & 0.987 & 0.953 \\
        \thickhline
    \end{tabular}
    \label{tab:res-app_cost}
\end{table}

\subsection{Results}\label{subsec:res}
\paragraph{Approval Ballots.}
Table \ref{tab:res-app_cost} and Figure \ref{fig:tradeoff-app} show results for approval ballots.
In both settings, compared to proportional rules (\textsc{SeqPhrag}, \textsc{MaximinSupp}, and all MES variants), LLMRule yields the best utilitarian social welfare while still maintaining a similar degree of fairness, which is interesting given that the rules generated from LLMRule are greedy in nature. 
Figure \ref{fig:learned-rule1} provides an example of such rule (from one of the three runs of LLMRule) for $sat^{cost}$.
Following the terminology in \cite{rey2023computational}, the rule generated by LLMRule can be formalized as $\{\textsc{Greed}(I,\triangleright)\mid\triangleright \text{ is compatible with }\sfrac{\sqrt{app/n}}{(1+c/b)}\}$. That is, $p$ precedes $p'$ in the order defined by $\triangleright$, denoted as $p\triangleright p'$, if and only if $\sfrac{\sqrt{app(p,\bm{A})/n}}{(1+c(p)/b)}\ge \sfrac{\sqrt{app(p',\bm{A})/n}}{(1+c(p')/b)}$ (recall from Section \ref{subsec:fairness-verify} that $app(p,\bm{A})=|\{i\in\mathcal{N}\mid A_i(p)=1\}|$). 
For comparison, the baseline greedy utilitarian rule, which only concerns utility optimization and is hence significantly less fair, is defined as \textsc{GreedUtil}$(I,\bm{A})=\{\textsc{Greed}(I,\triangleright)\mid\triangleright \text{ is compatible with }app\}$.
Furthermore, the good performance of LLMRule on ID test instances also applies to OOD ones, indicating the generalizability of LLM-generated rules to larger instances with more agents.

When using the same greedy rules from LLMRule as completion methods for MES (MES[$sat$]-\textsc{LLMRule})\footnote{Only MES is non-exhaustive in our considered baselines. We use the exhaustive variant of \textsc{SeqPhrag} and \textsc{MaximinSupp} (originally non-exhaustive) from Pabutools.}, as expected, we observe an improvement in $\phi$ in exchange for a decrease in $\omega$.
Notably, compared to other completion methods (`Add1', `Add1U', `Add1UM'), LLMRule yields the best $\phi$ for $sat=sat^{cost}$ and the best $\omega$ for $sat=sat^{card}$ (please refer to Figure \ref{fig:tradeoff-app-completion} for more succinct plots).

\begin{figure}[t]
    \centering
    \begin{subfigure}[h]{0.47\textwidth}
        \centering
        \footnotesize
        \includegraphics[width=0.99\textwidth]{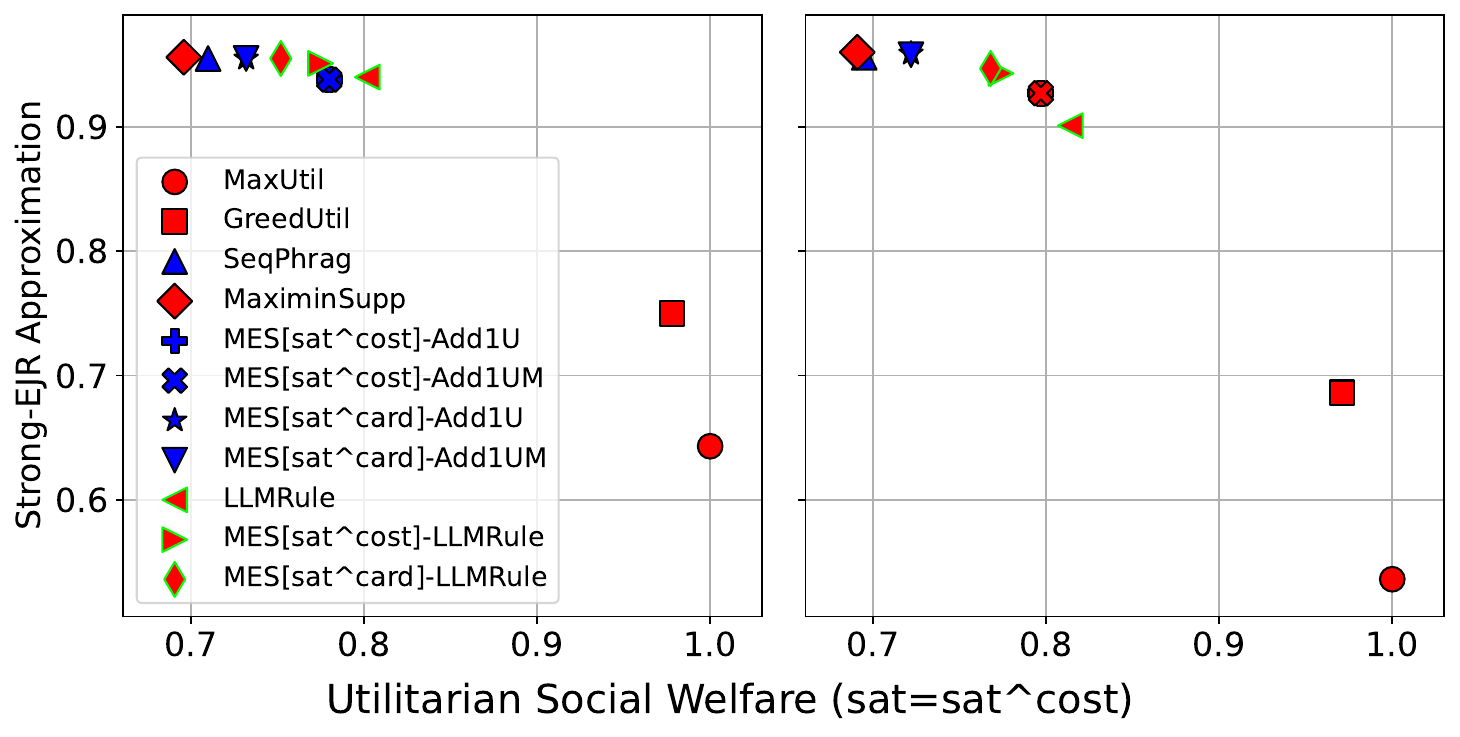}
    \end{subfigure}\\
    \begin{subfigure}[h]{0.47\textwidth}
        \centering
        \footnotesize
        \includegraphics[width=0.99\textwidth]{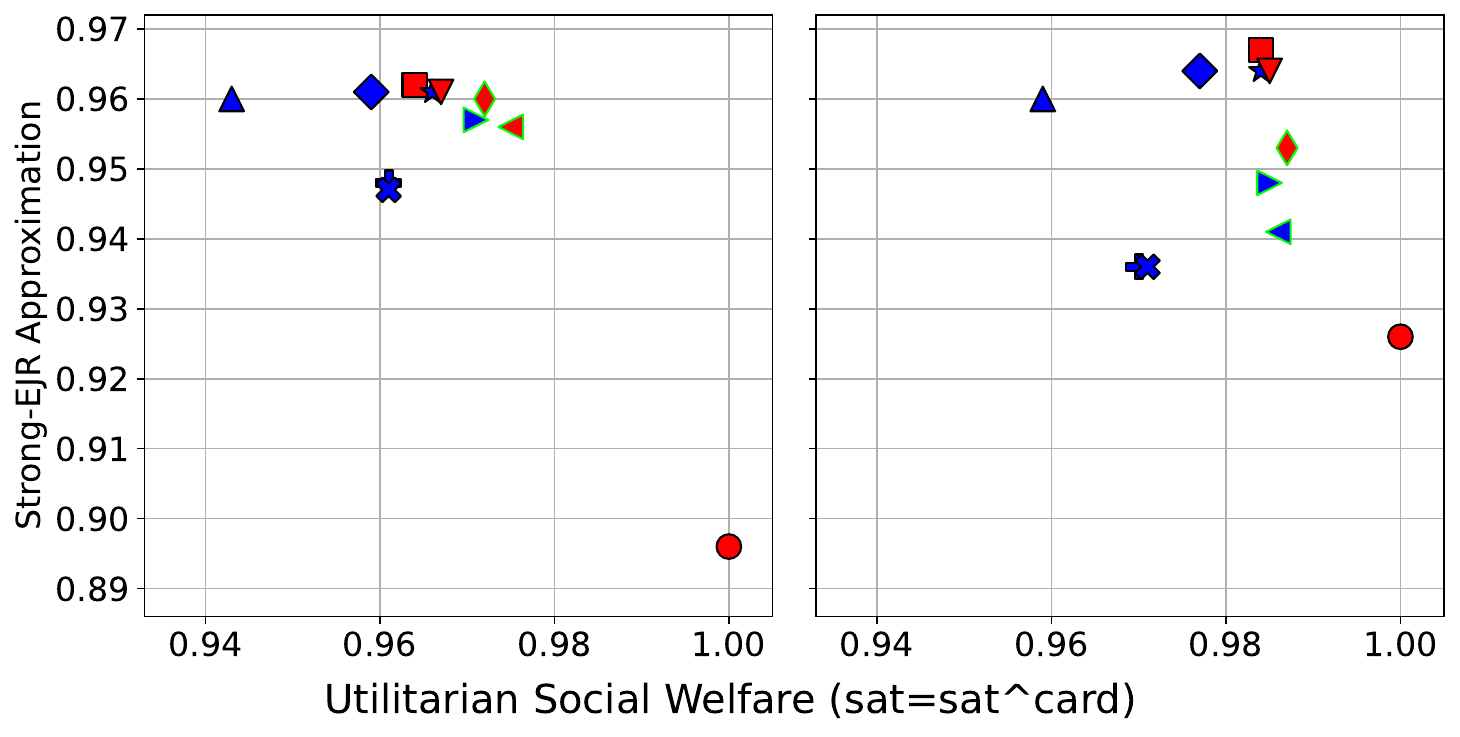}
    \end{subfigure}
    \caption{Utilitarian social welfare vs. Strong-EJR approximation for approval ballots on (left) ID and (right) OOD test sets. The Pareto front is highlighted red. Rules with LLMRule are bordered green. MES and MES-Add1 (either $sat$) were omitted due to being Pareto dominated in all settings.}
    \label{fig:tradeoff-app}
\end{figure}

Overall, the designed rules from LLMRule achieve impressive utility-fairness trade-offs and excel either as standalone greedy utilitarian rules or completion methods for non-exhaustive rules.
Further experiments involving synthetic data in Appendix \ref{app:exp} solidify our claim.

\paragraph{Cardinal Ballots.}
Table \ref{tab:res-cardinal} and Figure \ref{fig:tradeoff-cardinal} show results for the more general setting with cardinal ballots.
Surprisingly, LLMRule yields the best utilitarian social welfare among the baselines (beside the utilitarian maximizer), even surpassing the greedy utilitarian rule.
Meanwhile, the respective Strong-EJR approximation approaches 1, meaning the learned rules are still (empirically) fair.
We provide an example of such rule in Figure \ref{fig:learned-rule3}.
Compared to the previously shown rule for approval ballots, the greedy scheme from this rule appears to be more sophisticated: after computing the scores, it further ensures projects that bring the highest overall utility are prioritized.
Notice that the prompt strategy responsible for producing this rule is newly introduced via our prompt refinement module, which indicates its capability of facilitating the design of novel rules.



\begin{figure}
  \begin{minipage}[h]{.2\textwidth}
    \centering
    \small
    \captionof{table}{Results for cardinal ballots. Utilitarian rules and proportional rules are respectively shaded in gray and green.}
    \begin{tabular}{l|c|c}
        \thickhline
        PB rule & $\omega'$ & $\phi$ \\\hline
        \cellcolor{gray!25}\textsc{MaxUtil} & 1 & 0.921 \\
        \cellcolor{gray!25}\cellcolor{gray!25}\textsc{GreedUtil} & 0.961 & 1 \\\hline
        \cellcolor{green!25}MES & 0.454 & 0.921 \\
        \cellcolor{green!25}MES-Add1 & 0.909 & 1 \\
        \cellcolor{green!25}MES-Add1U & 0.945 & 1 \\
        \cellcolor{green!25}MES-Add1UM & 0.946 & 1 \\\thickhline
        \cellcolor{gray!25}\textsc{LLMRule} & 0.972 & 0.993 \\
        \cellcolor{green!25}MES-\textsc{LLMRule} & 0.970 & 0.995 \\
        \thickhline
    \end{tabular}
    \label{tab:res-cardinal}
  \end{minipage}\hfill
  \begin{minipage}[h]{.25\textwidth}
    \centering
    \includegraphics[width=0.99\textwidth]{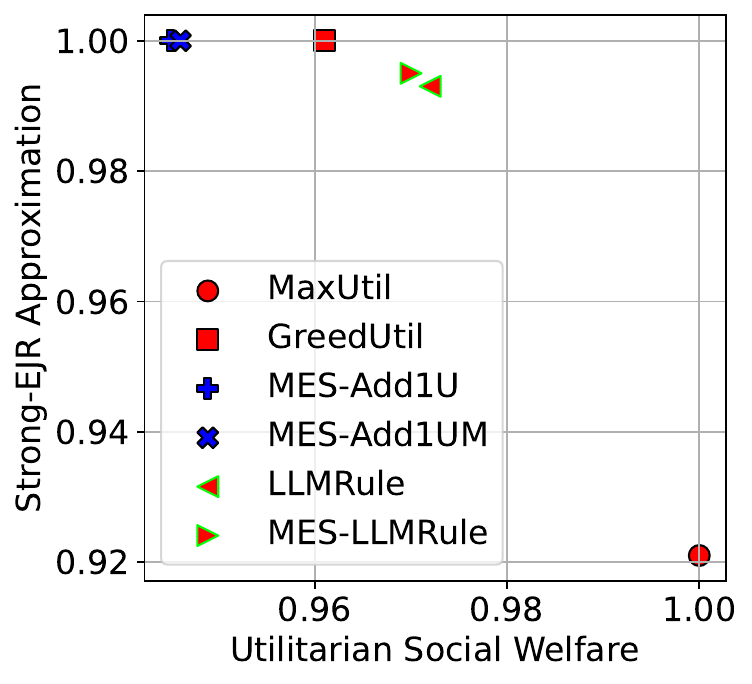}
    \captionof{figure}{$\omega'$ vs. $\phi$ for cardinal ballots. The Pareto front is highlighted red. Rules with LLMRule are bordered green. }
    \label{fig:tradeoff-cardinal}
  \end{minipage}
\end{figure}





\section{Related Work}\label{sec:related}

We discuss the related work focusing on the considered PB settings and LLMs for algorithmic design. 
For a comprehensive review of the PB literature, we refer readers to the surveys \cite{aziz2020participatory,rey2023computational}. For LLMs, please refer to \cite{liu2024systematic}. 
We follow the notations from Sections \ref{sec:prelim} and \ref{sec:method} when applicable.
Appendix \ref{app:related} expands on the quantification of fairness and existing data-driven approaches for designing PB rules.

\paragraph{The Design of Participatory Budgeting Rules.}

As previously mentioned, existing PB rules mainly fall into two types: \emph{utilitarian PB rules} and \emph{proportional PB rules}.
Utilitarian PB rules concern the overall utility of the agents.
For approval ballots, \citet{talmon2019framework} consider the max greedy and proportional greedy PB rules defined with respect to the utilitarian social welfare under the cardinality ($sat^{card}$) and cost ($sat^{cost}$)
satisfaction functions.
While maximizing the utilitarian social welfare under $sat^{card}$ can be done in polynomial time, they show that it is
weakly NP-hard under $sat^{cost}$, for which a pseudo-polynomial time algorithm is provided.
For cardinal ballots, \citet{fluschnik2019fair} consider the problem of utility maximization as a variant of the knapsack problem and propose similar greedy rules based on the knapsack literature \cite{kellerer2004knapsack}.

Proportional PB rules focus on the fairness of the resulting budget allocations with respect to agent preferences.
These rules ensure any sufficiently large and cohesive group of voters should secure allocation outcomes proportional to its size, which is well captured by Strong-EJR.
However, this ideal definition of fairness is not satisfiable in general \cite{chandak2024proportional}.
Therefore, existing works consider weaker but satisfiable fairness definitions, most notably EJR \cite{aziz2017justified,peters2021proportional} and PJR \cite{sanchez2017proportional} as well as their relaxations, e.g., EJR-1 \cite{peters2021proportional}, EJR-X \cite{brill2023proportionality}, PJR-1 \cite{los2022proportional}, PJR-X \cite{brill2023proportionality}.
Unfortunately, it has been theoretically shown that the ``price of fairness'' exists \cite{elkind2022price,fairstein2022welfare}.
More specifically, \cite{fairstein2022welfare} prove that if $f$ satisfies EJR[$sat^{card}$], then for any instance $I$ and profile $\bm{A}$ of approval ballots, the lower bound of $\omega'[sat^{card}](I,\bm{A},f)$ is $\frac{c_{\min}}{n\cdot b}\left\lfloor\frac{b}{c_{\max}}\right\rfloor$, where $c_{\min}=\min_{p\in\mathcal{P}}c(p)$ and $c_{\max}=\max_{p\in\mathcal{P}}c(p)$.
In other words, proportional rules are subject to poor worst-case utilitarian social welfare, which calls for the design of rules that yield better trade-offs between utility and fairness objectives.

\paragraph{Large Language Models for Algorithmic Design.}

LLMs have been applied in diverse tasks, including mathematical reasoning \cite{ahn2024large}, code generation \cite{jiang2024survey}, scientific discovery \cite{wang2023scientific}, and algorithm design \cite{liu2024systematic}. FunSearch \cite{romera2024mathematical} integrated LLMs into an evolutionary search framework for mathematical function design. EoH \cite{liu2024evolution} adopted LLMs to automatically generate, combine and improve heuristics in an evolutionary search. \cite{hu2024automated} proposed ADAS, or Automated Design of Agentic Systems, that aims to automatically generate powerful agentic system designs by using meta agents to program new agents. The significance of integrating LLMs into iterative search frameworks, particularly for challenging design tasks requiring environmental evaluations, has been emphasized in \cite{zhang2024understanding}. Despite these advancements \cite{liu2024systematic}, the application of LLMs to designing PB rules remains unexplored.

Applying existing LLM-based methods, e.g., EoH \cite{liu2024evolution}, requires addressing three challenges as previously stated in our introduction.
Among them, ensuring the verification of the fairness property of interest runs in tractable time is the most important.
Even in the special case of PB (i.e., multiwinner voting, where every project has unit cost and the budget allocation is required to be exhaustive), both EJR and PJR were shown to be coNP-complete to verify given a budget allocation \cite{aziz2017justified,aziz2018complexity}. 
While stronger variants of EJR and PJR may be verifiable in polynomial time, e.g., EJR+ and PJR+ \cite{brill2023robust} (though not as strong as Strong-EJR), this is not the case in general, e.g., as in full justified representation (FJR) and full proportional justified representation (FPJR) \cite{kalayci2025full}.
EJR+ and PJR+ were originally introduced for multiwinner voting settings \cite{brill2023robust} and could be extended to PB \cite{brill2023robust,schmidt2025discrete}. However, 
it is clear that the demonstrated polynomial-time verification procedure of either fairness property (from the proofs of Propositions 2 and 6 in the paper), which hinges on the assumption that every project has unit cost (i.e., each candidate occupies a seat if voted), no longer applies to PB.
In this paper, we circumvent this challenge by devising techniques to enable the fast computation of our newly introduced fairness objective based on our provably equivalent definition of Strong-EJR with reduced verification complexity.

\section{Conclusion}\label{sec:conclu}
In this paper, we propose LLMRule, the first LLM-based framework for automating the design of PB rules.
Through our experiments on various real-world PB instances, we demonstrate that LLMRule can generate rules that perform well in terms of both utility and fairness objectives while requiring minimal domain knowledge and prompt design. 
By this means, LLMRule can significantly reduce manual efforts involved in contemporary research on PB. 






\bibliographystyle{ACM-Reference-Format} 
\bibliography{main}


\clearpage
\onecolumn
\appendix

\renewcommand{\thefigure}{S\arabic{figure}}
\renewcommand{\thetable}{S\arabic{table}}
\renewcommand{\thelisting}{S\arabic{listing}}

\section{Complete Related Work}\label{app:related}

In this appendix, we discuss the related work focusing on the considered PB settings and LLMs for algorithmic design. 
For a comprehensive review of the PB literature, we refer readers to the surveys \cite{aziz2020participatory,rey2023computational}. For LLMs, please refer to \cite{liu2024systematic}. 
We follow the notations from Sections \ref{sec:prelim} and \ref{sec:method} when applicable.

\paragraph{The Design of Participatory Budgeting Rules.}

As mentioned throughout the paper, existing PB rules mainly fall into two types: \emph{utilitarian PB rules} and \emph{proportional PB rules}.
Utilitarian PB rules concern the overall utility of the agents.
For approval ballots, \citet{talmon2019framework} consider the max greedy and proportional greedy PB rules defined with respect to the utilitarian social welfare under the cardinality ($sat^{card}$) and cost ($sat^{cost}$)
satisfaction functions.
While maximizing the utilitarian social welfare under $sat^{card}$ can be done in polynomial time, they show that it is
weakly NP-hard under $sat^{cost}$, for which a pseudo-polynomial time algorithm is provided.
For cardinal ballots, \citet{fluschnik2019fair} consider the problem of utility maximization as a variant of the knapsack problem and propose similar greedy rules based on the knapsack literature \cite{kellerer2004knapsack}.

Proportional PB rules focus on the fairness of the resulting budget allocations with respect to agent preferences.
These rules ensure any sufficiently large and cohesive group of voters should secure allocation outcomes proportional to its size, which is well captured by Strong-EJR.
However, this ideal definition of fairness is not satisfiable in general \cite{chandak2024proportional}.
Therefore, existing works consider weaker but satisfiable fairness definitions, most notably EJR \cite{aziz2017justified,peters2021proportional} and PJR \cite{sanchez2017proportional} as well as their relaxations, e.g., EJR-1 \cite{peters2021proportional}, EJR-X \cite{brill2023proportionality}, PJR-1 \cite{los2022proportional}, PJR-X \cite{brill2023proportionality}.
Unfortunately, it has been theoretically shown that the ``price of fairness'' exists \cite{elkind2022price,fairstein2022welfare}.
More specifically, \cite{fairstein2022welfare} prove that if $f$ satisfies EJR[$sat^{card}$], then for any instance $I$ and profile $\bm{A}$ of approval ballots, the lower bound of $\omega'[sat^{card}](I,\bm{A},f)$ is $\frac{c_{\min}}{n\cdot b}\left\lfloor\frac{b}{c_{\max}}\right\rfloor$, where $c_{\min}=\min_{p\in\mathcal{P}}c(p)$ and $c_{\max}=\max_{p\in\mathcal{P}}c(p)$.
In other words, proportional rules are subject to poor worst-case utilitarian social welfare, which calls for the design of rules that yield better trade-offs between utility and fairness objectives.

\paragraph{Quantification of Fairness.}\label{par:quant-fairness}
In the multiwinner voting literature, the degree of fairness for approval-based committee (ABC) voting rules has been previously studied. \citet{skowron2021proportionality} introduces the so-called \emph{proportionality degree}, which is defined in terms of the average satisfaction yielded by an ABC rule. \citet{tao2025degree} extend \cite{skowron2021proportionality} by defining a new notion, termed \emph{EJR degree}, for quantifying fairness with respect to EJR. It describes the number of represented voters in each cohesive group. For instance, an ABC rule achieving EJR degree 1 satisfies EJR. From this definition, we can see that computing EJR degree is at least as hard as verifying EJR, which prevents this metric from being considered as a fairness objective in our work.


We are also aware of the \emph{representation ratio} metric considered in \cite{fairstein2022welfare,fairstein2024learning} for approval ballots. It is defined as $\sum_{i\in\mathcal{N}}\min\{1,sat_i(\pi)\}/\\\sum_{i\in\mathcal{N}}\min\{1,sat_i(\pi^{\text{OPT}})\}$ where $sat=sat^{card}$. Notice that we can rewrite this ratio as $\sum_{i\in\mathcal{N}}sat_i(\pi)/\sum_{i\in\mathcal{N}}sat_i(\pi^{\text{OPT}})=\omega'[sat^{CC}]$ where $sat=sat^{CC}$, $sat^{CC}(P)=\mathds{1}_{P\neq\emptyset}$ is the Chamberlin-Courant satisfaction function \cite{talmon2019framework}. Therefore, we argue this metric should be treated as a utility objective rather than a fairness objective.

\paragraph{Data-Driven Approaches for Designing PB Rules.}
To our best knowledge, the only work that employs machine learning for PB is \cite{fairstein2024learning}, which attempts to train neural networks for learning PB rules on approval ballots. As with most classical deep learning approaches, the proposed approach suffers from the lack of interpretability and the large sample size required (at least 10,000) for ensuring generalizability. 
Furthermore, only utility objectives are considered\footnote{\cite{fairstein2024learning} define the loss function in terms of the utilitarian social welfare and the representation ratio, the latter of which should be treated as a utility objective as well (see our explanation in \nameref{par:quant-fairness}).}.

\paragraph{Large Language Models for Algorithmic Design.}

LLMs have been applied in diverse tasks, including mathematical reasoning \cite{ahn2024large}, code generation \cite{jiang2024survey}, scientific discovery \cite{wang2023scientific}, and algorithm design \cite{liu2024systematic}. FunSearch \cite{romera2024mathematical} integrated LLMs into an evolutionary search framework for mathematical function design. EoH \cite{liu2024evolution} adopted LLMs to automatically generate, combine and improve heuristics in an evolutionary search. \cite{hu2024automated} proposed ADAS, or Automated Design of Agentic Systems, that aims to automatically generate powerful agentic system designs by using meta agents to program new agents. The significance of integrating LLMs into iterative search frameworks, particularly for challenging design tasks requiring environmental evaluations, has been emphasized in \cite{zhang2024understanding}. Despite these advancements \cite{liu2024systematic}, the application of LLMs to designing PB rules remains unexplored.

Applying existing LLM-based methods, e.g., EoH \cite{liu2024evolution}, requires addressing three challenges as previously stated in our introduction.
Among them, ensuring the verification of the fairness property of interest runs in tractable time is the most important.
Even in the special case of PB (i.e., multiwinner voting, where every project has unit cost and the budget allocation is required to be exhaustive), both EJR and PJR were shown to be coNP-complete to verify given a budget allocation \cite{aziz2017justified,aziz2018complexity}. 
While stronger variants of EJR and PJR may be verifiable in polynomial time, e.g., EJR+ and PJR+ \cite{brill2023robust} (though not as strong as Strong-EJR), this is not the case in general, e.g., as in full justified representation (FJR) and full proportional justified representation (FPJR) \cite{kalayci2025full}.
EJR+ and PJR+ were originally introduced for multiwinner voting settings \cite{brill2023robust} and could be extended to PB \cite{brill2023robust,schmidt2025discrete}. However, 
it is clear that the demonstrated polynomial-time verification procedure of either fairness property (from the proofs of Propositions 2 and 6 in the paper), which hinges on the assumption that every project has unit cost (i.e., each candidate occupies a seat if voted), no longer applies to PB.
In this paper, we circumvent this challenge by devising techniques to enable the fast computation of our newly introduced fairness objective based on our provably equivalent definition of Strong-EJR with reduced verification complexity.



\section{PJR Verification}\label{app:pjr}

An even weaker variant of Strong-EJR is proportional justified representation (PJR) \cite{sanchez2017proportional}, which only requires the cohesive group as a whole to achieve the required satisfaction.
\begin{definition}[PJR for Cardinal Ballots]\label{def:pjr-cardinal}
    Given an instance $I = \langle \mathcal{P}, c, b\rangle$ and a profile $\bm{A}$ of cardinal ballots, a budget allocation $\pi$ is said to satisfy PJR if for all $P \subseteq \mathcal{P}$ and all ($\alpha, P$)-cohesive groups $N$, we have:
    \begin{equation*}
        \sum_{p\in\pi}\max_{i\in N}A_{i}(p)\ge \sum_{p\in P}\min_{i\in N}A_i(p).
    \end{equation*}
\end{definition}
\begin{definition}[PJR for Approval Ballots]\label{def:pjr-app}
    Given an instance $I = \langle \mathcal{P}, c, b\rangle$, a profile $\bm{A}$ of approval ballots, and a satisfaction function $sat$, a budget allocation $\pi$ is said to satisfy PJR[$sat$] if for all $P \subseteq \mathcal{P}$ and all $P$-cohesive groups $N$, we have:
    \begin{equation*}
        sat\left(\bigcup_{i\in N}\{p\in\pi | A_i(p)=1\}\right)\ge sat(P).
    \end{equation*}
\end{definition}

\begin{remark}[Applicability to PJR Verification]\label{rem:pjr}
    Theorem \ref{thm:maximal-group} does not hold when agents in the respective $P$-cohesive groups are aggregated as a single agent. Formally, if $sat\left(\bigcup_{i\in N}\{p\in\pi | A_i(p)=1\}\right)\ge sat(P)$, then for any $N'\subseteq N$, we are not guaranteed to have $sat\left(\bigcup_{i'\in N'}\{p\in\pi | A_{i'}(p)=1\}\right)\ge sat(P)$, particularly when $i\in N\setminus N', N'\subset N$. The same applies to cardinal ballots. Hence, PJR verification does not enjoy similar complexity reduction.
\end{remark}

Remark \ref{rem:pjr} also holds for relaxations of PJR, e.g., PJR up to one project (PJR-1) \cite{los2022proportional} and PJR up to any project (PJR-X) \cite{brill2023proportionality}.


\section{Complete Implementation Details}\label{app:imp}
We consider three problem settings: approval ballots with (i) $sat=sat^{cost}$ and (ii) $sat=sat^{card}$, and (iii) cardinal ballots.
That is, the utility objectives are defined as (i) $\omega[sat^{cost}]$ and (ii) $\omega[sat^{card}]$ (Equation \ref{eq:util-app}), and (iii) $\omega$ (Equation \ref{eq:util-cardinal}).
In each setting, we use LLMRule to design greedy rules,
in which the task for the LLM is to design a priority function for assigning a score to each project. This function takes a list of project costs, the budget constraint, and the ballot profile as input and outputs the scores for the projects.
Using these scores, the allocation outcome $\pi\subseteq\mathcal{P}$ is constructed by greedily adding the projects until none of the remaining projects are affordable given the current budget. 

We conduct three runs of LLMRule for each problem setting and report the average performance. 
The LLM temperature is fixed at 1.
The evaluation resource $t$ is defined in terms of the maximum number of populations, for which we set to 20. 
The population size $h$ is 20, and the maximum running time for each instance is 60 seconds. 
When computing $\phi$ (Equations \ref{eq:strong-ejr-approx-card} and \ref{eq:strong-ejr-approx}), we consider the top-$\sigma$ maximal cohesive groups (Section \ref{subsec:fairness-verify}) with $\sigma$ capped at 100 if needed.
Unless otherwise stated, we assign the threshold $\varepsilon$ defined in Equation \ref{eq:fitness-penalty} as the largest Strong-EJR approximation returned by the baselines (averaged across all training instances), which is typically from MES-Add1U for cardinal ballots and MES[$sat^{card}$]-Add1U for approval ballots. 

For the proposed prompt refinement module (Section \ref{subsec:fairness-eval}), we set the size of top individuals in the population, $l$, to 3; the number of maximum consecutive populations without improvement (for detecting stagnation), $t$, to 3; the size of the top individuals produced by a prompt strategy (for computing its score), $d$, to 3; and the maximum size of the set of prompt strategies to 5.
For quality control of the LLM-generated prompt strategies, we only accept one if at least one of its produced rules is valid, i.e., not experiencing timeout or coding errors.

All above choices were made mainly to ensure reasonable runtime (typically half an hour) without hampering the generation of nontrivial rules.

\section{Prompts}\label{app:prompts}

\subsection{Prompt Tasks}

Table \ref{tab:prompt-tasks} lists the task descriptions used in LLM prompts.

\begin{table*}[h]
    \centering
    \caption{Task descriptions used in LLM prompts. }
    \begin{tabular}{p{0.16\textwidth}|p{0.8\textwidth}}
        \hline
        \multicolumn{1}{c|}{Ballot type} & \multicolumn{1}{c}{Task description} \\\hline
        Approval ($sat^{cost}$) & Given a set of M projects with varied construction costs, a budget, and a set of N approval samples on whether each project should be constructed or not, I need help finding a priority score for each project in order to maximize the average total cost of the selected and approved projects across all samples. \\\hline
        Approval ($sat^{card}$) & Given a set of M projects with varied construction costs, a budget, and a set of N approval samples on whether each project should be constructed or not, I need help finding a priority score for each project in order to maximize the average number of the selected and approved projects across all samples. \\\hline
        Cardinal & Given a set of M projects with varied construction costs, a budget, and a set of N cardinal samples each of which assigns a valuation to each project, I need help finding a priority score for each project in order to maximize the average utility value across all samples. \\
        \hline
    \end{tabular}
    \label{tab:prompt-tasks}
\end{table*}

\clearpage
\subsection{Variation Prompts}\label{subapp:var-prompt}
Figure \ref{fig:init_prompt-design} illustrates the exact initialization prompt and the initial prompts for exploration and modification.

\begin{figure*}[h]
    \centering
    \begin{subfigure}[h]{0.8\textwidth}
        \centering
        \footnotesize
        \begin{tcolorbox}[width=\textwidth, colback=blue!5]
        \textbf{Prompt for Initialization}

        \hphantom\\

        \textcolor{cyan}{[Task Description]}

        \hphantom\\

        First, describe your new strategy and main steps in one sentence. The description must be inside a brace. Next, implement it in Python using the following template:

        \hphantom\\

        \textcolor{dkgreen}{[Code Template]}

        \hphantom\\
        
        Do not give additional explanations.
        \end{tcolorbox}
    \end{subfigure}\\
    \begin{subfigure}[h]{0.45\textwidth}
        \centering
        \footnotesize
        \begin{tcolorbox}[width=\textwidth, colback=blue!5]
        \textbf{Initial Prompt for Exploration (E1)}

        \hphantom\\

        \textcolor{cyan}{[Task Description]}

        \hphantom\\

        I have 2 existing strategies with their codes as follows:\\
        No. 1 strategy and the corresponding code are:\\
        \textcolor{blue}{[Rule 1 Description]}\\
        \textcolor{dkgreen}{[Code 1]}

        \hphantom\\

        No. 2 strategy and the corresponding code are:\\
        \textcolor{blue}{[Rule 2 Description]}\\
        \textcolor{dkgreen}{[Code 2]}

        \hphantom\\

        \textcolor{mauve}{Please help me create a new strategy that has a totally different form from the given ones.}\\
        First, describe your new strategy and main steps in one sentence. The description must be inside a brace. Next, implement it in Python using the following template:

        \hphantom\\

        \textcolor{dkgreen}{[Code Template]}

        \hphantom\\
        
        Do not give additional explanations.
        \end{tcolorbox}
    \end{subfigure}%
    ~
    \begin{subfigure}[h]{0.5\textwidth}
        \centering
        \footnotesize
        \begin{tcolorbox}[width=\textwidth, colback=blue!5]
        \textbf{Initial Prompt for Modification (M1)}

        \hphantom\\

        \textcolor{cyan}{[Task Description]}

        \hphantom\\

        I have one strategy with its code as follows:\\
        Strategy description:  \textcolor{blue}{[Rule Description]}\\
        Code: \textcolor{dkgreen}{[Code]}\\
        Utility: \textcolor{brown}{[Fitness Value]}

        \hphantom\\

        \textcolor{mauve}{If utility is positive, please identify the main strategy parameters and assist me in creating a new strategy that has a different parameter settings of the score function provided. Otherwise, if utility is negative, please help me revise the strategy to improve its proportionality, which means that if any sufficiently large group of samples approve the same subset of projects P, then P should be selected to construct.}\\
        First, describe your new strategy and main steps in one sentence. The description must be inside a brace. Next, implement it in Python using the following template:

        \hphantom\\

        \textcolor{dkgreen}{[Code Template]}

        \hphantom\\
        
        Do not give additional explanations.
        \end{tcolorbox}
    \end{subfigure}
    \caption{Prompts used for initialization, exploration, and modification. The prompt strategies are marked in purple. See Table \ref{tab:prompt-tasks} for ``Task Description'' and Listings \ref{lst:code-template-app} and \ref{lst:code-template-cardinal} for the designed ``Code Template''.}
    \label{fig:init_prompt-design}
\end{figure*}

\clearpage
\subsection{Prompt Evolution}\label{subapp:prompt-evol}
Figure \ref{fig:evol_prompt-design} illustrates the exact prompt used for evolving prompt strategies.

\begin{figure*}[h]
    \centering
    \begin{subfigure}[h]{0.95\textwidth}
        \centering
        \scriptsize
        \begin{tcolorbox}[width=\textwidth, colback=blue!5]
        \textbf{Prompt for Evolving Exploration Prompt Strategies}

        \hphantom\\

        \textcolor{cyan}{[Task Description]}

        \hphantom\\

        I want to leverage the capabilities of LLMs to generate heuristic algorithms that can efficiently tackle this problem. I have already developed a set of initial prompts and observed the corresponding outputs. However, to improve the effectiveness of these algorithms, we need your assistance in carefully analyzing the existing prompts and their results. Based on this analysis, we ask you to generate new prompts that will help us achieve better results in solving the problem.

        \hphantom\\

        I have X existing prompts with average score (the higher the better) as follows:\\
        No. 1 prompt:\\
        Content: \textcolor{mauve}{[Prompt Strategy]}\\
        Score: \textcolor{orange}{[Prompt Fitness Value]}\\
        ...\\
        No. X prompt:\\
        Content: \textcolor{mauve}{[Prompt Strategy]}\\
        Score: \textcolor{orange}{[Prompt Fitness Value]}

        \hphantom\\

        Note that we categorize prompts into two groups: Exploration and Modification. Those I just showed are Exploration prompts, which ask LLMs to generate new strategies that are as different as possible from the input strategies.

        \hphantom\\

        Please help me create a new Exploration prompt that has a totally different form from the given ones but can be motivated from them. Describe your new prompt and main steps in one sentence. The description must be inside a brace. Do not give additional explanations.
        \end{tcolorbox}
    \end{subfigure}\\
    \begin{subfigure}[h]{0.95\textwidth}
        \centering
        \scriptsize
        \begin{tcolorbox}[width=\textwidth, colback=blue!5]
        \textbf{Prompt for Evolving Modification Prompt Strategies}

        \hphantom\\

        \textcolor{cyan}{[Task Description]}

        \hphantom\\

        I want to leverage the capabilities of LLMs to generate heuristic algorithms that can efficiently tackle this problem. I have already developed a set of initial prompts and observed the corresponding outputs. However, to improve the effectiveness of these algorithms, we need your assistance in carefully analyzing the existing prompts and their results. Based on this analysis, we ask you to generate new prompts that will help us achieve better results in solving the problem.

        \hphantom\\

        I have X existing prompts with average score (the higher the better) as follows:\\
        No. 1 prompt:\\
        Content: \textcolor{mauve}{[Prompt Strategy]}\\
        Score: \textcolor{orange}{[Prompt Fitness Value]}\\
        ...\\
        No. X prompt:\\
        Content: \textcolor{mauve}{[Prompt Strategy]}\\
        Score: \textcolor{orange}{[Prompt Fitness Value]}

        \hphantom\\

        Note that we categorize prompts into two groups: Exploration and Modification. Those I just showed are Modification prompts, which ask LLMs to refine an input strategy by modifying, adjusting parameters, and removing redundant parts.

        \hphantom\\

        Please help me create a new Modification prompt that has a totally different form from the given ones but can be motivated from them. Describe your new prompt and main steps in one sentence. The description must be inside a brace. Do not give additional explanations.
        \end{tcolorbox}
    \end{subfigure}
    \caption{Prompts used for evolving prompt strategies, where X is the size of the current set of prompt strategies.}
    \label{fig:evol_prompt-design}
\end{figure*}

\clearpage
\section{Code Template}\label{app:code-template}

Listings \ref{lst:code-template-app} and \ref{lst:code-template-cardinal} respectively show the code templates for instructing the LLM to follow when designing rules for approval and cardinal ballots.

\begin{listing*}[h]%
\caption{Code template for approval ballots.}%
\label{lst:code-template-app}%
\begin{lstlisting}[language=Python]
import numpy as np

def priority(project_costs, budget, approval_mat):
    '''
    Args:
    project_costs (np.ndarray): Array of length M storing the project costs.
    budget (int or float): A positive value.
    approval_mat (np.ndarray): An N-by-M matrix where each row contains a sample of binary approvals for the respective projects.

    Returns:
    scores (np.ndarray): Array of priority scores for the projects.
    '''

    # Placeholder (replace with your actual implementation)
    scores = ...

    return scores
\end{lstlisting}
\end{listing*}

\begin{listing*}[h]%
\caption{Code template for cardinal ballots.}%
\label{lst:code-template-cardinal}%
\begin{lstlisting}[language=Python]
import numpy as np

def priority(project_costs, budget, valuation_mat):
    '''
    Args:
    project_costs (np.ndarray): Array of length M storing the project costs.
    budget (int or float): A positive value.
    valuation_mat (np.ndarray): An N-by-M matrix where each row contains a sample of valuations for the respective projects.

    Returns:
    scores (np.ndarray): Array of priority scores for the projects.
    '''

    # Placeholder (replace with your actual implementation)
    scores = ...

    return scores
\end{lstlisting}
\end{listing*}











\clearpage
\section{Designed Rules}\label{app:designed-rules}

Figures \ref{fig:learned-rule2} and \ref{fig:learned-rule3} respectively show examples of the learned rules from LLMRule for approval ($sat=sat^{card}$) and cardinal ballots.

\begin{figure*}[h]
    \footnotesize
    \centering
    \begin{tcolorbox}[width=0.65\textwidth, colback=blue!5]
        \textbf{Rule Description (generated using the initial M1 prompt strategy)}

        \hphantom\\

        The new strategy modifies the scoring function to emphasize projects with high approval rates while proportionally reducing the influence of costly projects, using a linear budget fraction combined with a logarithmic transformation of approval rates.
    \end{tcolorbox}
    \medskip
\begin{lstlisting}[language=Python]
import numpy as np

def priority(project_costs, budget, approval_mat):
    '''
    Args:
    project_costs (np.ndarray): Array of length M storing the project costs.
    budget (int or float): A positive value.
    approval_mat (np.ndarray): An N-by-M matrix where each row contains a sample of binary approvals for the respective projects.

    Returns:
    scores (np.ndarray): Array of priority scores for the projects.
    '''

    # Calculate the average approval rates
    avg_approvals = np.mean(approval_mat, axis=0)

    # Calculate budget fractions
    budget_fractions = budget / project_costs

    # Calculate scores based on linear budget fractions and logarithmic approval rates
    scores = avg_approvals * (budget_fractions) * np.log(1 + avg_approvals)

    return scores
\end{lstlisting}
    \caption{High-level description of an LLM-generated PB rule for approval ballots with $sat=sat^{card}$ and its associated code.}
    \label{fig:learned-rule2}
\end{figure*}

\begin{figure*}[h]
    \footnotesize
    \centering
    \begin{tcolorbox}[width=0.65\textwidth, colback=blue!5]
        \textbf{LLM-Generated Prompt Strategy `MN1'}

        \hphantom\\

        Given a set of project valuations and current selection criteria, please devise a method to balance the utility across diverse project priorities while ensuring that at least the top N projects, based on sample approval, are included in the final selection.
    \end{tcolorbox}
    \medskip
    \begin{tcolorbox}[width=0.65\textwidth, colback=blue!5]
        \textbf{Rule Description (generated using the above `MN1' prompt strategy)}

        \hphantom\\

        The new strategy computes priority scores by calculating a weighted combination of the average valuation, a budget-based cost-effectiveness ratio, and a penalty for normalized cost, while ensuring that at least the top N projects with the highest valuations are included in the final selection.
    \end{tcolorbox}
    \medskip
\begin{lstlisting}[language=Python]
import numpy as np

def priority(project_costs, budget, valuation_mat):
    '''
    Args:
    project_costs (np.ndarray): Array of length M storing the project costs.
    budget (int or float): A positive value.
    valuation_mat (np.ndarray): An N-by-M matrix where each row contains a sample of valuations for the respective projects.

    Returns:
    scores (np.ndarray): Array of priority scores for the projects.
    '''

    M = len(project_costs)
    N = valuation_mat.shape[0]
    scores = np.zeros(M)

    for i in range(M):
        average_valuation = np.mean(valuation_mat[:, i])
        cost_efficiency = (budget / project_costs[i]) if project_costs[i] <= budget else 0
        normalized_cost_penalty = 1 - (project_costs[i] / budget) if project_costs[i] <= budget else 0

        # Calculate score
        scores[i] = (average_valuation * cost_efficiency) - normalized_cost_penalty

    # Ensure top N projects based on average valuation are included
    top_n_indices = np.argsort(-np.mean(valuation_mat, axis=0))[:N]
    for idx in top_n_indices:
        if scores[idx] < 0:
            scores[idx] = 0  # Ensure the score is non-negative

    return scores
\end{lstlisting}
    \caption{High-level description of an LLM-generated PB rule for cardinal ballots and its associated code.}
    \label{fig:learned-rule3}
\end{figure*}

\clearpage
\paragraph{LLMRule as Completion Methods.}

Figure \ref{fig:tradeoff-app-completion} (truncated from Figure \ref{fig:tradeoff-app}) highlights the performance of LLMRule when utilized as completion methods for MES.

\begin{figure}[h]
    \centering
    \begin{subfigure}[h]{0.65\textwidth}
        \centering
        \footnotesize
        \includegraphics[width=0.99\textwidth]{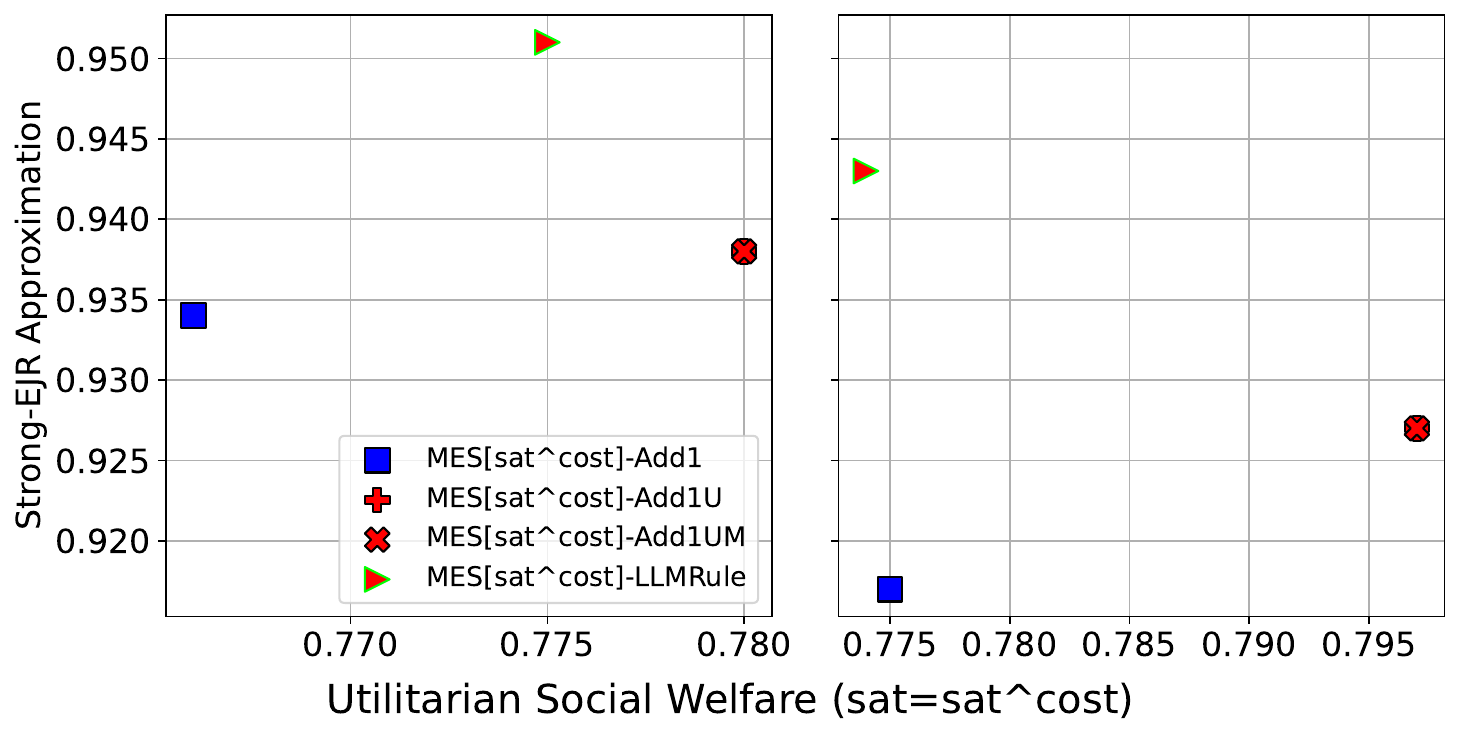}
    \end{subfigure}\\
    \begin{subfigure}[h]{0.65\textwidth}
        \centering
        \footnotesize
        \includegraphics[width=0.99\textwidth]{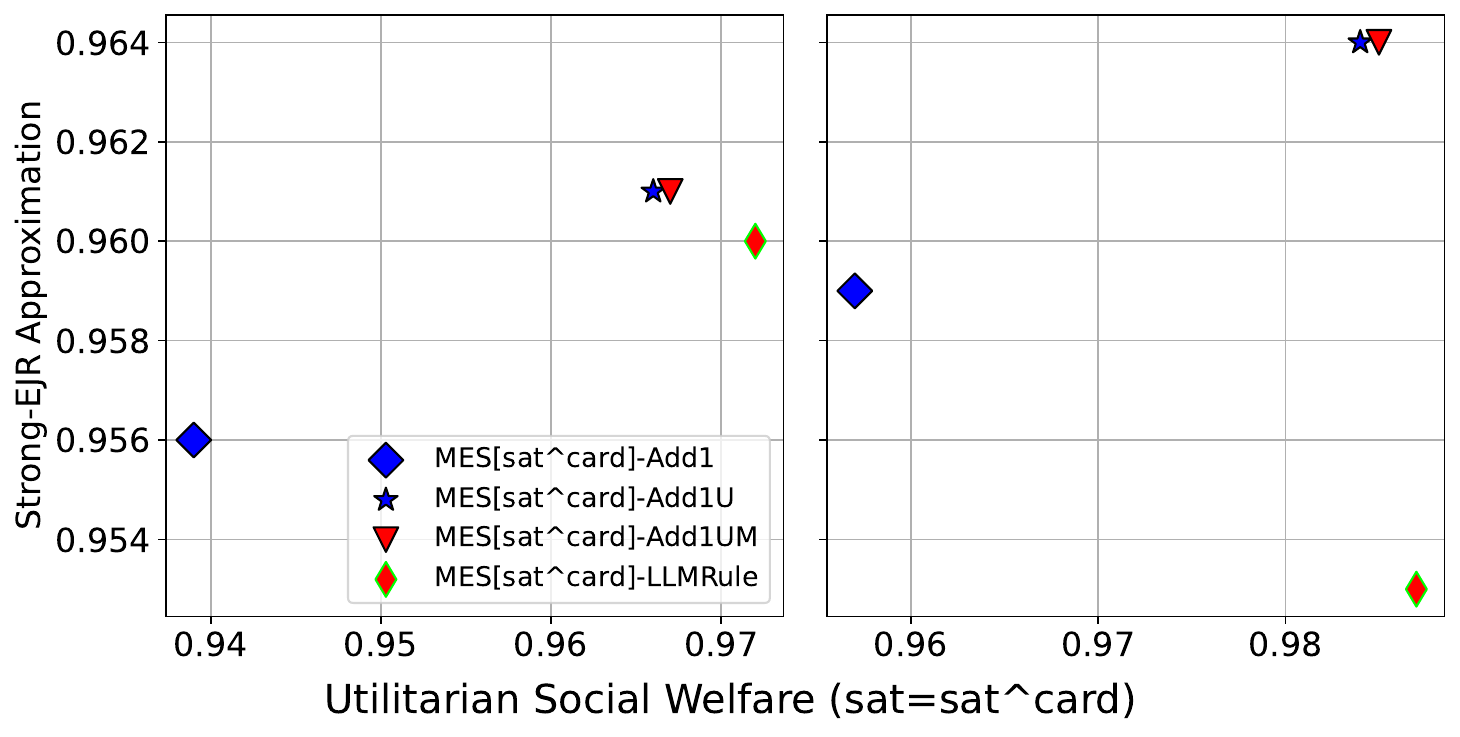}
    \end{subfigure}
    \caption{Utilitarian social welfare vs. Strong-EJR approximation of various completion methods for approval ballots on (left) ID and (right) OOD test sets. The Pareto front is highlighted red.}
    \label{fig:tradeoff-app-completion}
\end{figure}



    


    

    

\clearpage
\section{Additional Results with Synthetic Data}\label{app:exp}
We adopt the setup from \cite{fairstein2024learning}, which employs the standard two-dimensional Euclidean model \cite{enelow1984spatial,elkind2017multiwinner,boehmer2024guide}, to generate synthetic PB instances.
Table \ref{tab:data-syn} provides the specifications of our datasets.
Using similar experimental setups, we train LLMRule on synthetic PB instances and test the best learned rules on a hold-out set of synthetic PB instances as well as the previously employed Pabulib (small and large) test sets.
Table \ref{tab:res-app_cost-syn} and Figure \ref{fig:tradeoff-app-syn} summarize our results.


\begin{table}[h]
    \centering
    \caption{Synthetic data specifications.} 
    \begin{tabular}{l|c|c|c}
        Split & Sample size & $n$ & $m$ \\\hline
        Train & 100 & $[100,1000]$ & $[10,25]$ \\
        Test & 1,000 & $[100,2000]$ & $[10,25]$ \\
    \end{tabular}
    \label{tab:data-syn}
\end{table}

\begin{table}[h]
    \centering
    \caption{Results for approval ballots with (top) $sat=sat^{cost}$ and (bottom) $sat=sat^{card}$ on synthetic and Pabulib test sets. Utilitarian rules and proportional rules are respectively shaded in gray and green.} 
    \begin{tabular}{l|c|c|c|c|c|c}
        \thickhline
        PB rule & \makecell{$\omega'[sat^{cost}]$\\ (synthetic)} & \makecell{$\phi$\\ (synthetic)} & \makecell{$\omega'[sat^{cost}]$\\ (Pabulib-small)} & \makecell{$\phi$\\ (Pabulib-small)} & \makecell{$\omega'[sat^{cost}]$\\ (Pabulib-large)} & \makecell{$\phi$\\ (Pabulib-large)} \\\hline
        \cellcolor{gray!25}\textsc{MaxUtil} & 1 & 0.537 & 1 & 0.643 & 1 & 0.536 \\
        \cellcolor{gray!25}\textsc{GreedUtil} & 0.971 & 0.606 & 0.978 & 0.750 & 0.971 & 0.686 \\\hline
        \cellcolor{green!25}\textsc{SeqPhrag} & 0.766 & 0.775 & 0.710 & 0.955 & 0.695 & 0.956 \\
        \cellcolor{green!25}\textsc{MaximinSupp} & 0.734 & 0.771 & 0.696 & 0.956 & 0.691 & 0.960 \\
        \cellcolor{green!25}MES[$sat^{cost}$] & 0.657 & 0.617 & 0.524 & 0.831 & 0.470 & 0.806 \\
        \cellcolor{green!25}MES[$sat^{cost}$]-Add1 & 0.882 & 0.691 & 0.766 & 0.934 & 0.775 & 0.917 \\
        \cellcolor{green!25}MES[$sat^{cost}$]-Add1U & 0.895 & 0.702 & 0.780 & 0.938 & 0.797 & 0.927 \\
        \cellcolor{green!25}MES[$sat^{cost}$]-Add1UM & 0.895 & 0.702 & 0.780 & 0.938 & 0.797 & 0.927 \\
        \cellcolor{green!25}MES[$sat^{card}$] & 0.528 & 0.658 & 0.419 & 0.855 & 0.391 & 0.824 \\
        \cellcolor{green!25}MES[$sat^{card}$]-Add1 & 0.770 & 0.773 & 0.710 & 0.951 & 0.697 & 0.955 \\
        \cellcolor{green!25}MES[$sat^{card}$]-Add1U & 0.796 & 0.783 & 0.732 & 0.955 & 0.722 & 0.959 \\
        \cellcolor{green!25}MES[$sat^{card}$]-Add1UM & 0.797 & 0.782 & 0.732 & 0.955 & 0.722 & 0.958 \\\thickhline
        \cellcolor{gray!25}\textsc{LLMRule} & 0.894 & 0.691 & 0.884 & 0.863 & 0.848 & 0.848 \\
        \cellcolor{green!25}MES[$sat^{cost}$]-\textsc{LLMRule} & 0.873 & 0.753 & 0.762 & 0.952 & 0.747 & 0.947 \\
        \cellcolor{green!25}MES[$sat^{card}$]-\textsc{LLMRule} & 0.830 & 0.775 & 0.737 & 0.956 & 0.744 & 0.949 \\
        \thickhline
    \end{tabular}\\
    \medskip
    \begin{tabular}{l|c|c|c|c|c|c}
        \thickhline
        PB rule & \makecell{$\omega'[sat^{card}]$\\ (synthetic)} & \makecell{$\phi$\\ (synthetic)} & \makecell{$\omega'[sat^{card}]$\\ (Pabulib-small)} & \makecell{$\phi$\\ (Pabulib-small)} & \makecell{$\omega'[sat^{card}]$\\ (Pabulib-large)} & \makecell{$\phi$\\ (Pabulib-large)} \\\hline
        \cellcolor{gray!25}\textsc{MaxUtil} & 1 & 0.793 & 1 & 0.896 & 1 & 0.926 \\
        \cellcolor{gray!25}\textsc{GreedUtil} & 0.981 & 0.807 & 0.964 & 0.962 & 0.984 & 0.967 \\\hline
        \cellcolor{green!25}\textsc{SeqPhrag} & 0.952 & 0.795 & 0.943 & 0.960 & 0.959 & 0.960 \\
        \cellcolor{green!25}\textsc{MaximinSupp} & 0.951 & 0.794 & 0.959 & 0.961 & 0.977 & 0.964 \\
        \cellcolor{green!25}MES[$sat^{cost}$] & 0.739 & 0.647 & 0.763 & 0.854 & 0.732 & 0.820 \\
        \cellcolor{green!25}MES[$sat^{cost}$]-Add1 & 0.906 & 0.715 & 0.942 & 0.942 & 0.945 & 0.925 \\
        \cellcolor{green!25}MES[$sat^{cost}$]-Add1U & 0.922 & 0.726 & 0.961 & 0.948 & 0.970 & 0.936 \\
        \cellcolor{green!25}MES[$sat^{cost}$]-Add1UM & 0.922 & 0.725 & 0.961 & 0.947 & 0.971 & 0.936 \\
        \cellcolor{green!25}MES[$sat^{card}$] & 0.759 & 0.688 & 0.759 & 0.870 & 0.719 & 0.834 \\
        \cellcolor{green!25}MES[$sat^{card}$]-Add1 & 0.952 & 0.793 & 0.939 & 0.956 & 0.957 & 0.959 \\
        \cellcolor{green!25}MES[$sat^{card}$]-Add1U & 0.977 & 0.803 & 0.966 & 0.961 & 0.984 & 0.964 \\
        \cellcolor{green!25}MES[$sat^{card}$]-Add1UM & 0.977 & 0.802 & 0.967 & 0.961 & 0.985 & 0.964 \\\thickhline
        \cellcolor{gray!25}\textsc{LLMRule} & 0.967 & 0.790 & 0.973 & 0.958 & 0.986 & 0.957 \\
        \cellcolor{green!25}MES[$sat^{cost}$]-\textsc{LLMRule} & 0.961 & 0.779 & 0.970 & 0.955 & 0.986 & 0.957 \\
        \cellcolor{green!25}MES[$sat^{card}$]-\textsc{LLMRule} & 0.985 & 0.804 & 0.970 & 0.959 & 0.986 & 0.959 \\
        \thickhline
    \end{tabular}
    \label{tab:res-app_cost-syn}
\end{table}

\begin{figure}[h]
    \centering
    \begin{subfigure}[h]{0.98\textwidth}
        \centering
        \footnotesize
        \includegraphics[width=0.99\textwidth]{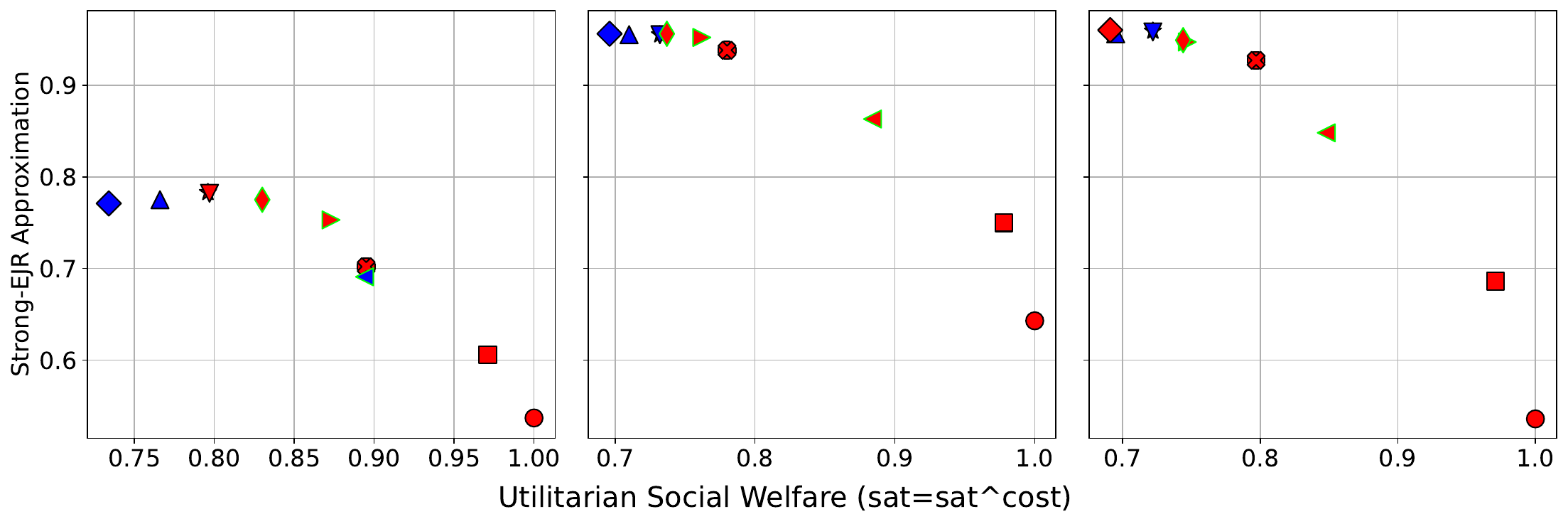}
    \end{subfigure}\\
    \begin{subfigure}[h]{0.98\textwidth}
        \centering
        \footnotesize
        \includegraphics[width=0.99\textwidth]{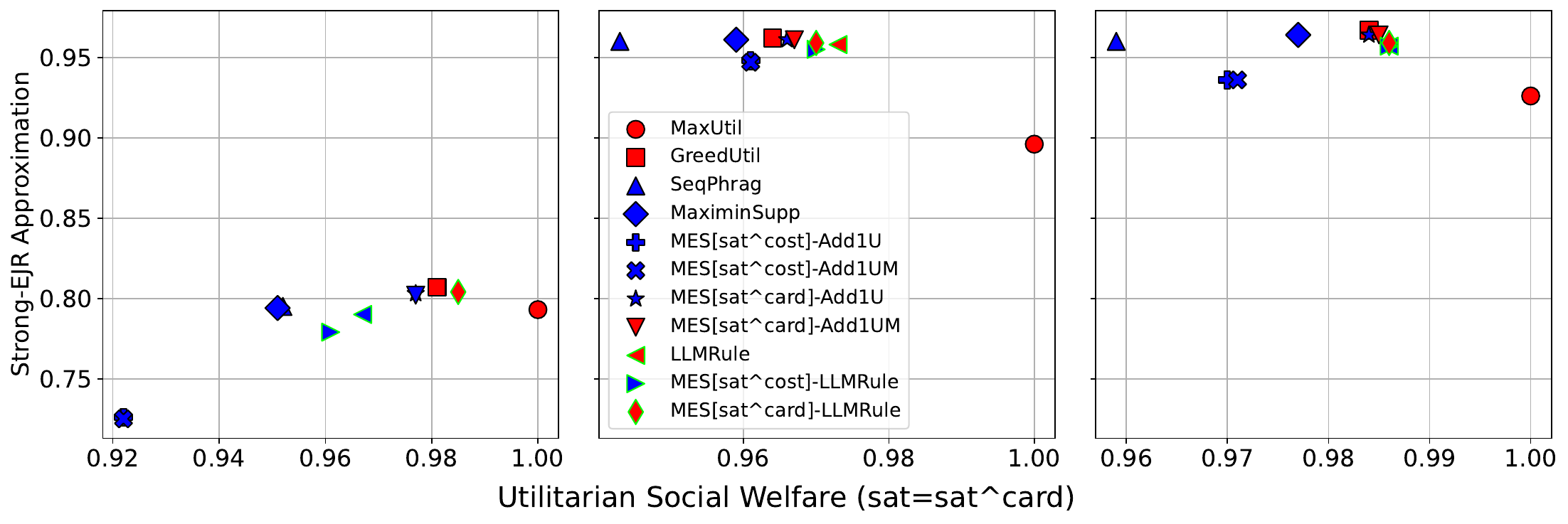}
    \end{subfigure}
    \caption{Utilitarian social welfare vs. Strong-EJR approximation for approval ballots on (left) synthetic, (center) Pabulib-small and (right) Pabulib-large test sets. The Pareto front is highlighted red. Rules with LLMRule are bordered green.}
    \label{fig:tradeoff-app-syn}
\end{figure}

\begin{figure}[h]
    \centering
    \begin{subfigure}[h]{0.98\textwidth}
        \centering
        \footnotesize
        \includegraphics[width=0.99\textwidth]{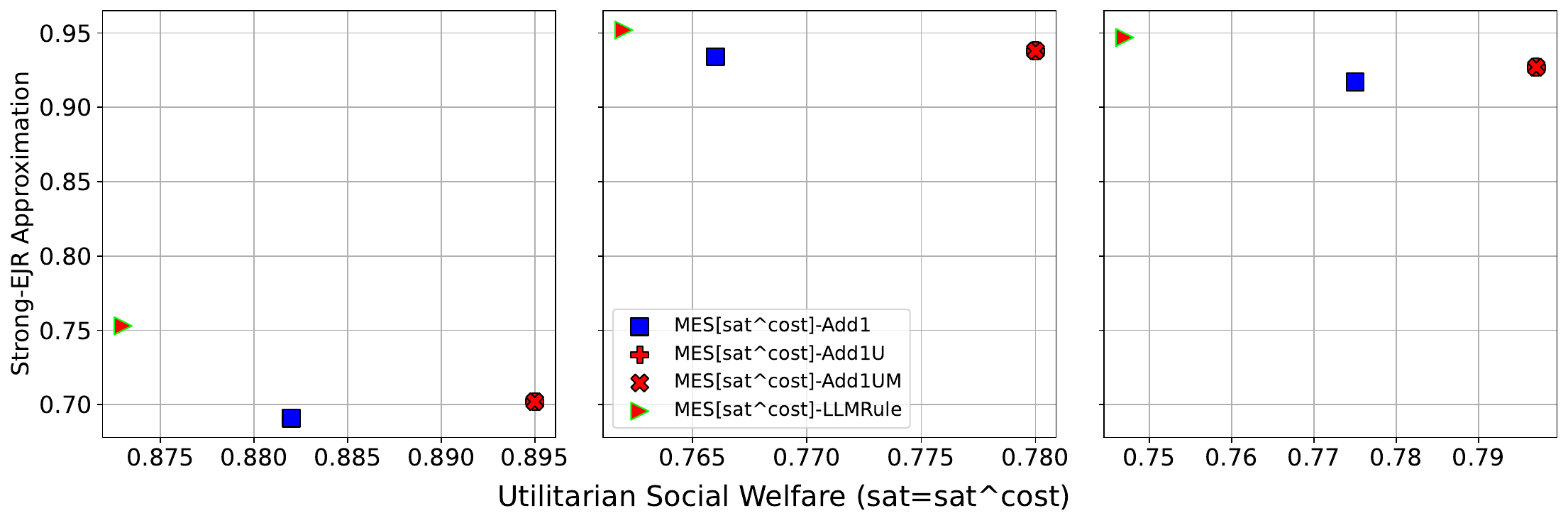}
    \end{subfigure}\\
    \begin{subfigure}[h]{0.98\textwidth}
        \centering
        \footnotesize
        \includegraphics[width=0.99\textwidth]{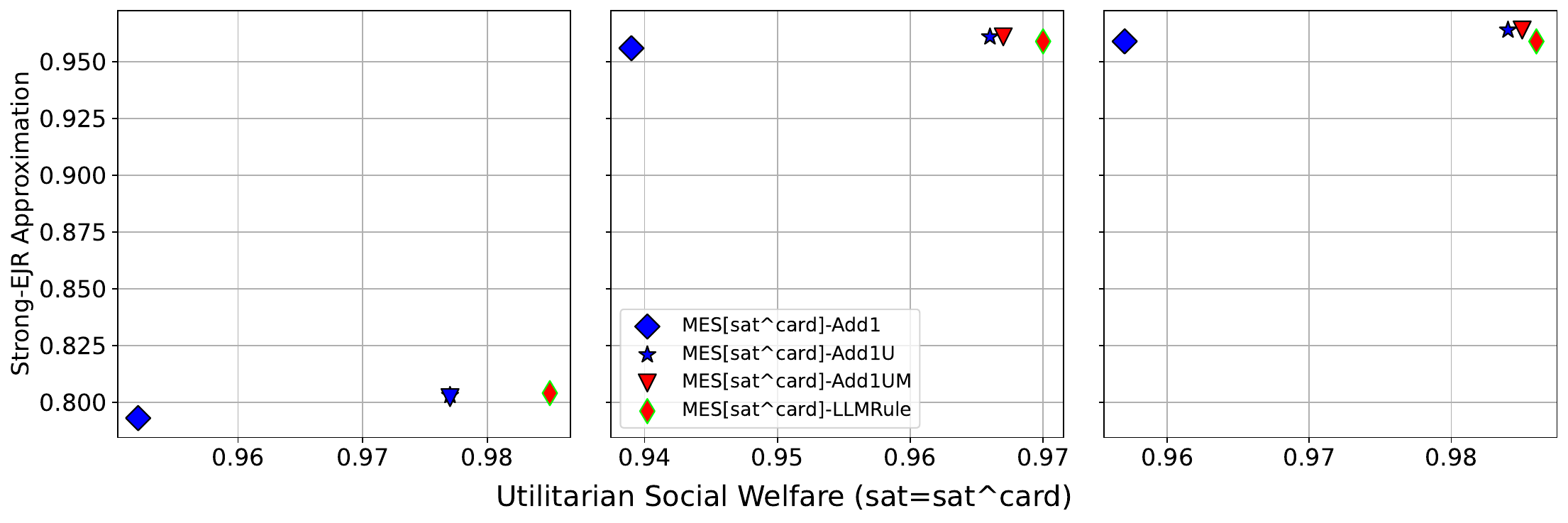}
    \end{subfigure}
    \caption{Utilitarian social welfare vs. Strong-EJR approximation of various completion methods for approval ballots on (left) synthetic, (center) Pabulib's small and (right) Pabulib's large test sets. The Pareto front is highlighted red.}
    \label{fig:tradeoff-app-completion-syn}
\end{figure}

\clearpage
\section{Limitations and Future Work}\label{app:limit}

We consider PB instances from Pabulib$^{\ref{fn:pabulib}}$ with at most 25 projects in the experiments, which comprise roughly 67\% and 92\% of all currently available instances with approval and cardinal ballots, respectively.
Despite the significant reduction in complexity for computing Strong-EJR approximation, our techniques described in Section \ref{subsec:fairness-verify} may not apply for very large $m$ due to the use of frequent itemset mining algorithms for identifying all maximal cohesive groups.
Given that we employ the classical Apriori algorithm \cite{huang2000fast}, which runs in $O(2^m)$, future works could consider more advanced algorithms to enhance efficiency and in turn enable LLMRule to work on PB instances with larger $m$.
Additionally, leveraging LLMRule to generate PB rules with provable fairness guarantees could be a promising extension.


\end{document}